\newcommand{\specialcell}[2][c]{%
  \begin{tabular}[#1]{@{}c@{}}#2\end{tabular}}
\icmltitlerunning{Optimal Clipping and Magnitude-aware Differentiation for Improved Quantization-aware Training}
\theoremstyle{plain}
\newtheorem{theorem}{Theorem}[section]
\newtheorem{proposition}[theorem]{Proposition}
\newtheorem{corollary}[theorem]{Corollary}
\theoremstyle{definition}
\theoremstyle{remark}
\begin{document}

\twocolumn[
\icmltitle{Optimal Clipping and Magnitude-aware Differentiation \\ for Improved Quantization-aware Training}



\begin{icmlauthorlist}
\icmlauthor{Charbel Sakr}{to}
\icmlauthor{Steve Dai}{to}
\icmlauthor{Rangharajan Venkatesan}{to}
\icmlauthor{Brian Zimmer}{to}
\icmlauthor{William J. Dally}{to}
\icmlauthor{Brucek Khailany}{to}
\end{icmlauthorlist}

\icmlaffiliation{to}{The authors are with NVIDIA Corporation, Santa Clara, CA 95051 USA}
\icmlcorrespondingauthor{Charbel Sakr}{csakr@nvidia.com}

\icmlkeywords{Machine Learning, ICML}

\vskip 0.3in
]



\printAffiliationsAndNotice{}  

\begin{abstract}
Data clipping is crucial in reducing noise in quantization operations and improving the achievable accuracy of quantization-aware training (QAT). Current practices rely on heuristics to set clipping threshold scalars and cannot be shown to be optimal. We propose Optimally Clipped Tensors And Vectors (OCTAV), a recursive algorithm to determine MSE-optimal clipping scalars. Derived from the fast Newton-Raphson method, OCTAV finds optimal clipping scalars on the fly, for every tensor, at every iteration of the QAT routine. Thus, the QAT algorithm is formulated with provably minimum quantization noise at each step. In addition, we reveal limitations in common gradient estimation techniques in QAT and propose magnitude-aware differentiation as a remedy to further improve accuracy. Experimentally, OCTAV-enabled QAT achieves state-of-the-art accuracy on multiple tasks. These include training-from-scratch and retraining ResNets and MobileNets on ImageNet, and Squad fine-tuning using BERT models, where OCTAV-enabled QAT consistently preserves accuracy at low precision (4-to-6-bits). Our results require no modifications to the baseline training recipe, except for the insertion of quantization operations where appropriate.
\end{abstract}

\section{Introduction}
\label{sec:intro}
Deep neural networks (DNNs) are powerful models achieving state-of-the-art accuracy on various cognitive tasks such as image classification, object detection, and natural language processing \cite{lecunNature}. However, DNN successes have been achieved at the expense of a high computational and parameter complexity. Indeed, networks commonly require around 4 billion multiply-accumulates (MACs) \cite{resnet} or have over 100 million parameters \cite{deepface}. With further progress in deep learning, the growth of DNN resource requirements shows no sign of slowing down \cite{bianco2018benchmark}. Fortunately, reduced-precision implementation has been shown to largely lower the computational complexity of deep learning models \cite{ibmIcml15}.

Reduced-precision deep learning is widely adopted and multi-faceted. One option is to perform post-training quantization (PTQ), which benefits inference only and consists of taking a pretrained network and implementing it in reduced-precision, with no retraining allowed. While conceptually simple, PTQ is challenging, as determining an accuracy-preserving quantization strategy is non-trivial. In recent works, proposed practical PTQ solutions include speculative hybrid high/low precision number formats \cite{biscaledDAC,predictivenet} and Batch-Norm-guided data free quantization \cite{nagelDFQ}.

The results presented in our work apply to any quantization setup, including PTQ. However, the main feature is the ability to optimize quantization metadata on the fly. This is most arguably effective for another facet: quantization-aware training (QAT), where weights and activations are quantized during training. A superset of this problem is fully quantized training (FQT) \cite{sakrICLRfx}, where gradients and weight updates are also quantized. While we focus on QAT here, an interesting and important extension of our work is to apply our results to FQT.

\subsection{Quantization-aware training and related works} 
There are three main use cases for QAT:
\textbf{training-from-scratch} where the starting point is a randomly initialized network; \textbf{retraining} where a pretrained model is quantized and retrained for a short time on the same dataset; and \textbf{fine-tuning} where the starting point is a model pretrained on one dataset and  trained on another.

Early works on QAT showed that binary-weighted \cite{binaryConnect} and fully-binarized \cite{binaryNet} networks can be accurately trained on simple models and datasets. To improve accuracy in more difficult tasks, DoReFa-Net \cite{dorefa} increased the forward precision to 4-bit and used \textit{max-scaling}, i.e., matching the largest quantized representation to the largest value in the set of elements (tensor or vector) to be quantized.

An advantage of max-scaling is that it is well-defined. Indeed, max-scaled QAT can be implemented using the same training recipe as a full precision baseline, simply by inserting the quantization operations where appropriate. Thus, there are no hyperparameters required, and results can be readily reproduced. Unfortunately, max-scaling incurs large amounts of quantization noise, harming accuracy. Nevertheless, quantization fidelity can be improved with clipping, which has been researched by several recent works.

Making the \textit{clipping scalar} a learned parameter, as was done in PACT \cite{pact}, has been shown to significantly improve accuracy.  However, by virtue of the added learnable parameters, a QAT-dedicated training recipe is required. As such, PACT is highly sensitive to hyperparameter tuning and is therefore difficult to reproduce and cannot easily generalize. An advantage of PACT is that, much like max-scaling, it computes quantization metadata on the fly. We term such a scheme \textbf{dynamic quantization}, which is useful for setups with time-varying tensor statistics, such as training-from-scratch and fine-tuning.

Alternatively, \textbf{static quantization} was explored by \cite{wu2020integer} who used calibration on pretrained data in order to fix the clipping scalars at the start of QAT. Such an approach can only be applied when tensor statistics minimally change over time, e.g., in a short retraining or fine-tuning setup. Further, the choice of calibration strategy is closely tied to the performance of QAT, necessitating network-specific exploration. Percentile calibration was shown to be robust \cite{wu2020integer}, and state-of-the-art retraining accuracy was obtained through extensive calibration exploration \cite{paretoCVPR21}. An advantage of static quantization is its conceptual simplicity; once a good calibration strategy is identified, a QAT routine can be readily and reproducibly implemented.

The above works all use uniform quantization, also called integer quantization \cite{wu2020integer} and fixed-point quantization \cite{sakrICLRfx}. In this work, we also focus on uniform quantization, which is well-suited for efficient hardware implementations, particularly at low (less-than-8-bit) precision \cite{hanEIE}.

Still, we do note that recent works have shown promising results for QAT under non-uniform quantization: structured quantization, such as that derived from low precision floating-point \cite{ibmNips18,IBMNIPS2019} and logarithmic \cite{lee2017lognet, nvidiaLNS} number systems, as well as custom formats, such as Flexpoint \cite{flexpoint}, AdaptivFloat \cite{tambeAdaptivfloat}, and LQNets \cite{lqnet}. We derive all our theoretical and experimental results for uniform quantization; however, our work can be extended to non-uniform quantization.

Finally, in our work, we use the exact same training recipe as the full-precision baseline for training-from-scratch and fine-tuning. For retraining, we use a shortened version of the training-from-scratch recipe. Recent works have attempted to improve accuracy using QAT-specific training techniques, such as distillation \cite{distillationCVPR2020}. Such works are orthogonal to ours; our methods can be inserted on top of any training routine.

\subsection{Contributions}
None of the prior arts provides guarantees on the optimality of the chosen clipping scalars. Most works use a strategy believed to be adequate, e.g., relying on the training algorithm \cite{pact}, or yielding small quantization noise on calibrated data \cite{wu2020integer}. In contrast, our work makes the following contributions:
\begin{itemize}[leftmargin=*,itemsep=0.1\baselineskip,topsep=0pt]
    \item We derive OCTAV: a fast recursive algorithm based on the Newton-Raphson method to determine MSE-minimizing clipping scalars. With OCTAV, optimal quantization metadata can be computed for every tensor, at every iteration of the QAT routine. Thus, the QAT algorithm is formulated with minimum quantization noise at each iteration.
    \item We analyze common candidates for quantized gradient estimation, for which we reveal risks of gradient explosion and partial premature stoppage of convergence. We avoid these risks by proposing magnitude-aware differentiation, which leads to a noticeable improvement in QAT accuracy.
    \item We show that OCTAV-enabled training-from-scratch QAT achieves state-of-art accuracy on several ImageNet benchmarks. Indeed, 4-bit training of ResNet-50, ResNet-18, ResNet-101, and MobileNet-V2 models results in less-than-1\% accuracy degradation compared to the full precision baseline. We also provide promising results on the much-harder-to-quantize MobileNet-V3-Small and MobileNet-V3-Large. For all results, no modification to the baseline training recipe is made.
    \item We find OCTAV-enabled QAT to always yield highly accurate solutions in 4-bit retraining. We find static quantization more accurate for large models, such as ResNets; and thus propose static-OCTAV for fast calibration yielding high accuracy. In contrast, small models such as MobileNets require dynamic quantization for retraining, and OCTAV is shown to be far superior to any other strategy.
    \item Finally, we find OCTAV-enabled QAT to be most appropriate for Squad fine-tuning of BERT models. Even when restricted to static quantization, we find that static-OCTAV consistently outperforms other calibration methods.
\end{itemize}
\section{Clipped Quantization}
\label{sec:clipped_quantization}

Consider some data $x$ derived from a distribution $f_X()$. We define $B$-bit quantization as the process of mapping $x$ to one of $2^B$ predefined levels $\{r_i\}_{i=1}^{2^B}$. The quantized data is obtained as:
$    \mathbbm{Q}(x) = \arg\min_{\{r_i\}_{i=1}^{2^B}}|x-r_i|.$
The choice of $\{r_i\}_{i=1}^{2^B}$ is crucial in setting the fidelity of quantization, which we metricize via the mean squared error (MSE):
\begin{align}
    \label{eqn:mse_definition}
    J = \mathbbm{E}\left[ \left(\mathbbm{Q}(X) - X \right)^2 \right]
\end{align}
For unconstrained quantization, this metric can be minimized using the Lloyd-Max algorithm \cite{lloyd_max}.

Among uniformly constrained quantizers, we first introduce the maxed-scaled one. Assume there exists a scalar $s_{\max}$ such that $f_X(x)=0$ for $|x|>s_{\max}$. In practice, $s_{\max}$ can be the largest available element in absolute value. The max-scaled quantizer assigns the levels $\{r_i\}_{i=1}^{2^B}$ as an arithmetic progression on $\left[-s_{\max},s_{\max}\right]$\footnote{We assume signed data, without loss of generality. In Appendix \ref{appendix:unsigned}, we list all required modifications for our results to apply to the unsigned case, i.e., quantization over $\left[0,s_{\max}\right]$.}. Thus, the max-scaled quantization operation is given by:
\begin{align}
    \label{eqn:max_scaled_quantization}
    \mathbbm{Q}(x) = s_{\max}\cdot2^{1-B}\cdot\text{round}\left({x\cdot2^{B-1}}/{s_{\max}} \right)
\end{align}
with the rounding operation being applied on integers\footnote{For notational simplicity and mathematical tractability, boundary effects introduced by number systems (e.g., two's complement) are neglected. They are, however, implemented in our experiments.}.
This quantizer has been extensively studied in signal processing \cite{goelQnoise} and machine learning \cite{sakrICML} and its MSE, derived using an additive model of quantization noise, is given by $J=s_{\max}^2\frac{4^{-B}}3$. 

Often, max-scaling is data inefficient due to its large quantization range and thus step size. A simple, but powerful method to improve uniform quantization is to allow for data clipping \cite{sakrTSP,gonugondlaICCAD}. Specifically, a narrower quantization interval $[-s,s]$ is used, with the clipping scalar $s<s_{\max}$, and the quantization operation given by:
\begin{align}
    &\mathbbm{Q}(x) =  \text{clip}\left(s\cdot2^{1-B}\cdot\text{round}\left({x\cdot2^{B-1}}/{s}\right) ,-s,s\right) \nonumber \\
    &=\begin{cases}
    -s \quad \text{if } x<-s\\
    s\cdot2^{1-B}\cdot\text{round}\left({x\cdot2^{B-1}}/{s}\right) ~ \text{if } x\in[-s,s]\\
    s \quad \text{if } x>s
    \end{cases} 
    \label{eqn:clipped_quantization_definition}
\end{align}
    
With clipping, the MSE in \eqref{eqn:mse_definition} depends on $s$ and is given by:
\begin{align}
    \label{eqn:clipped_quantization_mse}
    J(s) = \frac{4^{-B}}3s^2\int_0^sf_{|X|}(x)dx + \int_s^{\infty}(s-x)^2f_{|X|}(x)dx
\end{align}
where $f_{|X|}()$ is the distribution of the absolute value of the data. Equation \eqref{eqn:clipped_quantization_mse} is obtained by evaluating \eqref{eqn:mse_definition} using the law of total expectation; the aforementioned additive noise model is assumed on the discretization interval $[-s,s]$ and the definition of MSE is used when clipping occurs. {For discretization noise, the term $\frac{s^24^{-B}}{3}$ does not require \emph{a priori} knowledge of data distribution. It is obtained through sampling theory where quantization noise arises via approximating the neighborhood of a quantization level of \emph{any} distribution as a local rectangle \cite{widrowBook}. }

\begin{figure}[!t]
\begin{center}
    \begin{subfigure}[t]{0.49\textwidth}
    \centering
        \includegraphics[trim=0 2cm 0 2cm, clip, width = 0.99\linewidth,page=1]{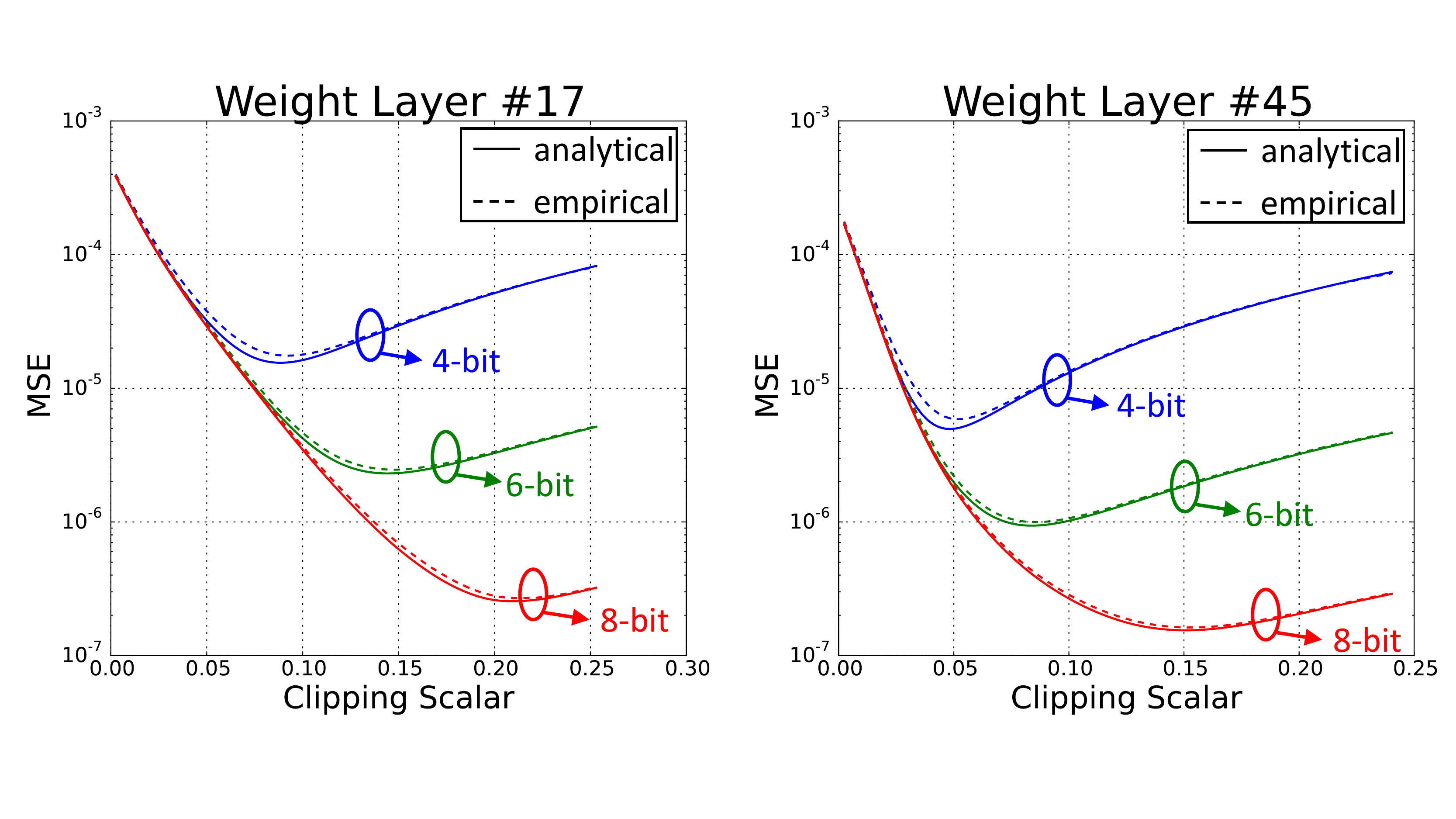}
    \caption{}
    \end{subfigure}
    \begin{subfigure}[t]{0.49\textwidth}
    \centering
        \includegraphics[trim=0 2cm 0 1cm, clip, width = 0.99\linewidth,page=2]{figures/paper_figures.pdf}
    \caption{}
    \end{subfigure}
\end{center}
\caption{Sweep of the quantization MSE as a function of the clipping scalar $s$ for two arbitrary weight (a) and activation (b) layers in a pretrained ResNet-50 model. Activation data is obtained by sampling a random input batch from the training set. Solid lines are obtained by evaluating the MSE formula in \eqref{eqn:clipped_quantization_mse} using histograms and numerical integration. Dashed lines are obtained by empirically evaluating \eqref{eqn:mse_definition}, i.e., quantizing each tensor element according to \eqref{eqn:clipped_quantization_definition} and averaging the resulting squared errors.}
\label{fig:mse_vs_scalars_curves}
\end{figure}

Figure \ref{fig:mse_vs_scalars_curves} shows variations in quantization MSE as a function of clipping scalar. We use data from a trained ResNet-50 model \cite{resnet}, with activations corresponding to a random input batch sampled from the ImageNet dataset \cite{imagenet}. We depict both the evaluation of \eqref{eqn:clipped_quantization_mse} using histograms and numerical integration, and the empirically measured MSE in \eqref{eqn:mse_definition} via element-wise quantization and squared error averaging. We observe the following:
\begin{itemize}[leftmargin=*,itemsep=0.1\baselineskip,topsep=0pt]
    \item Our formula closely matches the empirical MSE. Thus, we use it as a building block for our upcoming analyses.
    \item There exists an optimal scalar $s^*$ minimizing the MSE. This optimum balances the trade-off between discretization and clipping noise. When $s<s^*$, excess clipping leads to an increase in $J$ in spite of smaller discretization noise. Conversely, when $s>s^*$, clipping is minimal but the larger quantization step size causes an increase in discretization noise and $J$.
    \item The optimal scalar $s^*$ is a function of both data distribution $f_{X}()$ and number of bits $B$. The dependence on $f_X()$ is identified by virtue of $s^*$ being different for different layers (e.g., when $B=4$, $s^*$ is approximately $0.1$ and $0.05$ for weight layers \#17 and \#45, respectively). The dependence on $B$ is identified by virtue of $s^*$ varying with precision when data is unchanged (e.g., for activation layer \#13, $s^* $ is approximately $1.0$ and $2.0$ for $B=4$ and $B=8$, respectively).
\end{itemize}

Finding $s^*$ can be done \textbf{offline} through brute force search, i.e., sweeping the value of $s$. However, this task is highly time-consuming and hard to implement dynamically. The analytical evaluation of \eqref{eqn:clipped_quantization_mse} requires histograms to estimate $f_X()$ and numerical integration. Similarly, an empirical evaluation requires successive rounding and reduction operations on large tensors.
In the next section, we present a method to determine the optimal clipping scalar $s^*$ \textbf{online}.

\section{Optimally Clipped Tensors And Vectors}
\label{sec:octav}
We present our main theoretical result: a recursive formula to analytically determine $s^*$.
\begin{theorem}
\label{thm:octav}
Given a data distribution $f_X()$, the clipping scalar $s^*$ minimizing the clipped quantization MSE in \eqref{eqn:clipped_quantization_mse} can be found by assigning a random guess $s_1$ and recursively computing $\{s_n\}_{n>1}$ until convergence using:
\begin{align}
\label{eqn:octav_theorem}
    s_{n+1} = \frac{\mathbbm{E}\left[|X|\cdot \mathbbm{1}_{\{|X|>s_n\}} \right]}{\frac{4^{-B}}{3}\mathbbm{E}\left[\mathbbm{1}_{\{|X|\leq s_n\}}\right] +\mathbbm{E}\left[ \mathbbm{1}_{\{|X|>s_n\}}\right]}
\end{align}
\end{theorem}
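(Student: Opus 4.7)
The plan is to set up the first-order stationarity condition $J'(s^\ast)=0$ for the MSE in (4), observe that a mild simplification rearranges it exactly into the fixed-point form of (5), and then identify the recursion as a Newton-Raphson step on this stationarity equation. The algebra itself is short; the only subtlety is a boundary term that is implicitly discarded by the same sampling-theory noise model that underlies (4).

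First I would differentiate $J(s)$ using Leibniz's rule on both integrals in (4). The $(s-s)^2 f_{|X|}(s)$ piece from the upper limit of the second integral vanishes, and collecting the remaining terms gives
\begin{equation*}
\tfrac{1}{2}J'(s) \;=\; \tfrac{4^{-B}}{3}\,s\,\mathbbm{E}\!\left[\mathbbm{1}_{\{|X|\le s\}}\right] + s\,\mathbbm{E}\!\left[\mathbbm{1}_{\{|X|>s\}}\right] - \mathbbm{E}\!\left[|X|\,\mathbbm{1}_{\{|X|>s\}}\right] + \tfrac{4^{-B}}{6}\,s^{2} f_{|X|}(s).
\end{equation*}
The last term is the Leibniz boundary contribution from the upper limit of the first integral. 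As the paper emphasizes in the paragraph preceding Figure~1, the prefactor $\tfrac{s^{2}4^{-B}}{3}$ in (4) is produced by a distribution-agnostic sampling-theory argument: the discretization-noise power on $[-s,s]$ is treated as a function of step size alone, not of the local density at the boundary. Under that physical model the boundary term is not part of the variational derivative; equivalently, it is $O(4^{-B})$ times a density and is subleading for moderate $B$. Dropping it and setting the rest to zero yields
\begin{equation*}
s^\ast \left[\tfrac{4^{-B}}{3}\,\mathbbm{E}\!\left[\mathbbm{1}_{\{|X|\le s^\ast\}}\right] + \mathbbm{E}\!\left[\mathbbm{1}_{\{|X|>s^\ast\}}\right]\right] \;=\; \mathbbm{E}\!\left[|X|\,\mathbbm{1}_{\{|X|>s^\ast\}}\right],
\end{equation*}
which is exactly the fixed point of the map in (5).

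Second, I would recover the recursion as a Newton-Raphson step. Define $A(s) = \tfrac{4^{-B}}{3}\mathbbm{E}[\mathbbm{1}_{\{|X|\le s\}}] + \mathbbm{E}[\mathbbm{1}_{\{|X|>s\}}]$ and $C(s) = \mathbbm{E}[|X|\,\mathbbm{1}_{\{|X|>s\}}]$, and set $g(s) = s\,A(s) - C(s)$, so that $s^\ast$ is a root of $g$. The Newton step, taken with $A$ and $C$ frozen at the current iterate $s_n$ (the same ``treat the density as locally constant'' convention that justified discarding the boundary term above), linearizes $g$ to $s\,A(s_n) - C(s_n)$, whose root is $s_{n+1} = C(s_n)/A(s_n)$, matching (5) term for term.

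The hardest part will be justifying (a) the suppression of the $f_{|X|}(s)$ boundary contribution and (b) convergence of the iteration. For (a), I lean on the sampling-theory interpretation of the discretization-noise prefactor as the paper's chosen physical model; the same convention is what turns the Newton step into a closed-form update. For (b), on $[0, s_{\max}]$ one has $g(0) = -\mathbbm{E}[|X|] \le 0$ and $g(s_{\max}) = \tfrac{4^{-B}}{3}\,s_{\max} > 0$, so a root exists by the intermediate value theorem; a direct computation gives $g'(s) = A(s) + \tfrac{4^{-B}}{3}\,s\,f_{|X|}(s) > 0$, so $s^\ast$ is unique. Standard Newton-Raphson theory---or equivalently the contraction property of $s\mapsto C(s)/A(s)$ near $s^\ast$ (it is a ratio of bounded, smooth statistics whose derivative at $s^\ast$ is strictly between $0$ and $1$ under the monotonicity above)---then gives convergence from any positive initialization, completing the proof.
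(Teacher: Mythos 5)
Your proposal is essentially the paper's own proof: the paper likewise applies Newton--Raphson to $J(s)$, obtaining the same $J'$ and $J''$ by treating the derivatives of the indicator functions as zero---which is exactly your discarded Leibniz boundary term $\frac{4^{-B}}{3}s^2 f_{|X|}(s)$---and the cancellation in $s_{n+1}=s_n-J'(s_n)/J''(s_n)$ yields precisely your $C(s_n)/A(s_n)$, so your ``frozen-coefficient'' step is the same computation. One small quibble in your added convergence remark: the derivative of the fixed-point map at $s^\ast$ works out to $-\frac{4^{-B}}{3}\,s^\ast f_{|X|}(s^\ast)/A(s^\ast)$, which is negative rather than in $(0,1)$, though the paper itself only argues convergence informally from $J''>0$.
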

\begin{proof}
We provide the complete proof in Appendix \ref{appendix:octav_proof}. For the benefit of the interested reader, we  here mention the main idea behind the result. It consists of using the Newton-Raphson algorithm, which recursively computes 
$s_{n+1} = s_n - {J'(s_n)}/{J''(s_n)}$.
In Appendix \ref{appendix:octav_proof}, we show how first and second derivatives of $J(s)$ are derived to obtain \eqref{eqn:octav_theorem}.
\end{proof}
Theorem \ref{thm:octav} applies to an arbitrary distribution, and the following corollary applies to tensor and vector quantization.
\begin{corollary}
\label{corr:octav}
The clipping scalar $s^*$ minimizing the clipped quantization MSE in a tensor or vector $\vec{t}$ can be found by assigning a random guess $s_1$ and recursively computing $\{s_n\}_{n>1}$ until convergence using:
\begin{align}
    &s_{n+1} =\frac{\sum_{x\in\vec{t}}\left[|x|\cdot \mathbbm{1}_{\{|x|>s_n\}} \right]}{\frac{4^{-B}}{3}\sum_{x\in\vec{t}}\left[\mathbbm{1}_{\{0<|x|\leq s_n\}}\right] +\sum_{x\in\vec{t}}\left[\mathbbm{1}_{\{|x|>s_n\}}\right]}
\label{eqn:octav_corr}
\end{align}
\end{corollary}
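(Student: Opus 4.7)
The plan is to reduce Corollary~\ref{corr:octav} to Theorem~\ref{thm:octav} by specializing the theorem to the empirical distribution induced by $\vec{t}$. Concretely, I would take $X$ to be the random variable assigning mass $1/|\vec{t}|$ to each entry of $\vec{t}$, so that $\mathbbm{E}[g(X)]=\tfrac{1}{|\vec{t}|}\sum_{x\in\vec{t}} g(x)$ for every measurable $g$. Since Theorem~\ref{thm:octav} imposes no hypothesis on $f_X()$ beyond the validity of the MSE expression \eqref{eqn:clipped_quantization_mse}, its Newton-Raphson recursion applies verbatim for this choice of $X$.

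Substituting into \eqref{eqn:octav_theorem}, each of the three expectations becomes a tensor-wide sum divided by $|\vec{t}|$; since this common $1/|\vec{t}|$ factor appears in every term of both numerator and denominator, it cancels, producing exactly the ratio in \eqref{eqn:octav_corr}. The only cosmetic difference between the two formulas is the strengthening of $\mathbbm{1}_{\{|X|\leq s_n\}}$ to $\mathbbm{1}_{\{0<|x|\leq s_n\}}$. I would justify this refinement by observing that the discretization-noise term $\tfrac{s^{2}4^{-B}}{3}$ in \eqref{eqn:clipped_quantization_mse} comes from a local-rectangle argument valid only for nonzero samples, whereas an element $x=0$ is quantized exactly to zero and contributes no squared error. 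For a continuous density the set $\{X=0\}$ has measure zero and the distinction is invisible, but for tensors with many hard zeros (e.g., post-ReLU activations or pruned weights) excluding them from the discretization count yields a strictly more faithful empirical MSE surrogate, and hence a more accurate Newton update.

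I expect the only genuine obstacle to be technical: the empirical $J(s)$ is piecewise smooth rather than smooth, with a finite set of kinks at the values $s=|x|$ for $x\in\vec{t}$. Since this set has measure zero and the derivatives $J'$, $J''$ used in $s_{n+1}=s_n-J'(s_n)/J''(s_n)$ remain well-defined at every generic $s$, the Newton-Raphson derivation of Appendix~\ref{appendix:octav_proof} goes through with point masses in place of the density $f_{|X|}$, yielding the recursion \eqref{eqn:octav_corr}.
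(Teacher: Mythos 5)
Your proposal is correct and follows essentially the same route as the paper: specialize Theorem~\ref{thm:octav} to the empirical distribution of $\vec{t}$, cancel the common $1/|\vec{t}|$ normalization between numerator and denominator, and exclude exact zeros from the discretization-noise count because they are represented exactly and would otherwise inflate the noise estimate for sparse tensors. Your added remark on the piecewise-smoothness of the empirical $J(s)$ is a reasonable elaboration of the point-mass caveat the paper already raises in Appendix~\ref{appendix:octav_proof}, not a departure from its argument.
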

\begin{proof}
The empirical distribution of the data inside $\vec{t}$ is used in lieu of the abstract $f_X()$ in Theorem \ref{thm:octav}. Thus, expectations in \eqref{eqn:octav_theorem} are replaced by average summations. Numerator and denominator are both multiplied by the number of elements in $\vec{t}$, suppressing the need for division. As zeros can be represented using integer quantization, zero elements in $\vec{t}$ are excluded from the distribution (see first term in the denominator). This is done to prevent an over-estimation of the total quantization noise for very sparse tensors. 
\end{proof}
We call the algorithm in Corollary \ref{corr:octav} Optimally Clipped Tensors And Vectors (OCTAV). Derived from the Newton-Raphson method, OCTAV converges very quickly algorithmically. We only use 10 iterations for any OCTAV implementation in this paper. Additionally, OCTAV is insensitive to the choice of initial guess. Indeed, for weight and activation tensors of a pretrained ResNet-50 network, OCTAV consistently converges to the same solution for various choices of $s_1$ including $0$, $s_{\max}$, $3\sigma_{\vec{t}}$, $4\sigma_{\vec{t}}$, and $5\sigma_{\vec{t}}$; where $\sigma_{\vec{t}}$ denotes the tensor standard deviation. In our upcoming experiments, we use $s_1 = \sum_{x\in\vec{t}}|x|/\sum_{x\in\vec{t}}\mathbbm{1}_{|x|>0}$, which is the value of $s_3$ if the initial guess is set to $s_{\max}$.

Computationally, each iteration of the OCTAV algorithm can be implemented using fast operations. Indeed, the only vector/tensor operations required are the indicator function, which is realizable via simple Boolean datatype casting, and element-wise absolute values, multiplications, and comparisons. Afterwards, sum reductions are performed and only residual scalar operations remain, including one division.

The algorithmic and computational efficiencies of OCTAV make it significantly faster than a conventional brute force search for $s^*$. On a CPU, and with no code optimizations, OCTAV is $\sim$10$\times$ faster than brute force when applied to weight and activation tensors of a BERT-Base model. Details of this comparison are included in Appendix \ref{appendix:timing_comparison}. 

Importantly, all operations required in \eqref{eqn:octav_corr} are tensor operations. Thus, OCTAV can be implemented on GPUs using any deep learning package. For instance, our implementation only invokes native PyTorch \cite{pytorch} operations. Consequently, we can embed OCTAV into any QAT routine to realize dynamic quantization using optimal clipping scalars for each tensor at each iteration. The added optimization does incur an overhead, but because OCTAV is fast, it is possible to perform the desired QAT in reasonable amounts of time. We also note that all OCTAV operations are broadcastable and can be used when sub-tensor scaling is required \cite{wu2020integer}. Thanks to the broadcasts, optimization for finer-grained scaling incurs no slowdown.

\begin{figure}[!t]
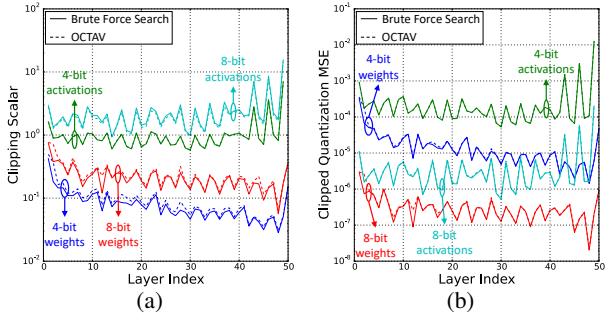

\begin{center}
    \begin{subfigure}[t]{0.49\linewidth}
    \centering
        \includegraphics[trim=7cm 0cm 7cm 0cm, clip, width = 0.99\linewidth,page=5]{figures/paper_figures.pdf}
    \caption{}
    \end{subfigure}
    \begin{subfigure}[t]{0.49\linewidth}
    \centering
        \includegraphics[trim=7cm 0cm 7cm 0cm, clip, width = 0.99\linewidth,page=6]{figures/paper_figures.pdf}
    \caption{}
    \end{subfigure}
\end{center}
\caption{Comparison of (a) optimal clipping scalar as determined by a brute force search (solid lines) and OCTAV (dashed lines), and (b) corresponding empirically measured clipped quantization MSE at every weight and activation layer in a pretrained ResNet-50 model for $B=4,8$. The OCTAV-determined clipping scalar is the result of invoking Corollary \ref{corr:octav} and \eqref{eqn:octav_corr} while the brute force search is realized by sweeping the value and $s$ and empirically evaluating \eqref{eqn:mse_definition}.}
\label{fig:octav_sweep_comparison_resnet}
\end{figure}

The OCTAV algorithm is guaranteed to converge to the global optimum of the convex MSE $J(s)$ in \eqref{eqn:clipped_quantization_mse}. The trade-off between clipping and discretization noise discussed in Section \ref{sec:clipped_quantization} leads to this convexity, which is verified by virtue of the second derivative $J''(s)$ being positive (see Appendix \ref{appendix:octav_proof}). In Figure \ref{fig:octav_sweep_comparison_resnet}(a), we plot the optimal clipping scalar for all weight and activation layers in a ResNet-50 pretrained model, as determined by OCTAV and a brute force search. Consistently, both solutions are either equal or close to one another. Even in the case of a slight mismatch, the resulting quantizers have identical MSE, as shown in Figure \ref{fig:octav_sweep_comparison_resnet}(b). 

Recall that \eqref{eqn:clipped_quantization_mse} is derived with additive noise assumed in the discretization region. This model is valid provided the quantization step is small \cite{widrowBook}. On rare occasions, and in the presence of very large outliers, this model can be inaccurate. Indeed, using a large clipping scalar to cater for outliers at the expense of quantizing all small values to zero leads to one local minimum of the empirical MSE in \eqref{eqn:mse_definition} unidentified by \eqref{eqn:clipped_quantization_mse}. In this case, OCTAV still converges to the local minimum balancing discretization and clipping, where the noise model in \eqref{eqn:clipped_quantization_mse} is valid. This rare phenomenon was observed in some activation layers of BERT models and investigated in Appendix \ref{appendix:mse_convexity}. Interestingly, the local minimum to which OCTAV converges to is a favorable one for QAT, as shown in Section \ref{sec:fine_tuning}.

{The formulation above minimizes the quantization MSE by choice. We aspire to train with minimum quantization noise variance, and have shown above how to do so using OCTAV. Minimizing noise is a desirable feature of QAT, and our promising experimental results in Section \ref{sec:experiments} support this contention. Using the above idea of online optimization using the Newton-Raphson algorithm, it is possible to derive similar methods for minimizing alternative quantization fidelity metrics such as $L_p$-norm, KL divergence, and others. Such optimizations are beyond the scope of this paper, but can form the basis of interesting extensions of our work.}

\section{Improving QAT Gradient Estimation}
\label{sec:gradients}
The OCTAV algorithm enables QAT with minimal noise at each iteration, thereby boosting accuracy. Nonetheless, QAT fundamentally relies on differentiating the discontinuous quantization operation, requiring a gradient estimator. The estimation choice impacts convergence, warranting an analysis of available options. We present mathematical limitations of the commonly employed straight-through estimator (STE) and piece-wise linear (PWL) gradients for clipped quantization. 
We then overcome these limitations by proposing the magnitude-aware derivative (MAD).

\subsection{Limitations of Current Gradient Estimation}
\begin{figure}[!t]
\begin{center}
        \includegraphics[trim=6.5cm 2.5cm 8cm 2cm, clip, width = 0.67\linewidth,page=7]{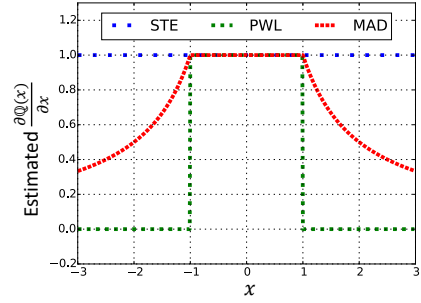}
\end{center}
\caption{Gradient estimators for the clipped quantization operation including STE, PWL, and our proposed MAD. For this example, we assume the clipping scalar to be $s=1$.}
\label{fig:derivatives_comparisons}
\end{figure}
We start with an analysis of gradient back-propagation using the STE that overlooks \eqref{eqn:clipped_quantization_definition} to set $\frac{\partial^{\text{(STE)}}\mathbbm{Q}(x)}{x} = 1$. With clipping, this approximation results in gradient explosion, which causes instability. This is shown via a second order (variance) study similar to that of \cite{heInit} for initialization. Such analysis is useful for assessing the suitability of back-propagation with quantization \cite{sakrICLRfl}.

For an arbitrary activation $x_l$ at layer $l$, we write $\Delta x_l=\frac{\partial\mathcal{L}}{\partial x_l}$ to be the true gradient with respect to the loss function $\mathcal{L}$. This gradient is fundamentally defined as the rate of marginal change in loss function for a marginal change in activation value. Further, let $\Delta^{\text{(STE)}}x_l$ be the estimate of $\Delta x_l$ under STE. The following result holds.
\begin{proposition}[Gradient explosion with STE]
\label{prop:grad_explosion}
In an $L$-layer network, there exists a positive $\delta$ such that the ratio of variances of STE gradient $\text{Var}\left(\Delta^{\text{(STE)}}X_l \right)$ to true gradient $\text{Var}\left(\Delta X_l \right)$ at layer $l$ is lower bounded by:
\begin{align}
    \label{eqn:grad_explosion}
    \frac{\text{Var}\left(\Delta^{\text{(STE)}}X_l \right)}{\text{Var}\left(\Delta X_l \right)} \geq (1+\delta)^{L-l}
\end{align}
\end{proposition}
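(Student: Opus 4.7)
The plan is to follow the variance-propagation style of analysis used by \cite{heInit, sakrICLRfl}: at each quantizer along the backward path I would relate the variance of the back-propagated gradient under the STE to that under the true gradient, then compound these local ratios across the $L-l$ quantizers separating layer $l$ from the loss.

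First I would compute the single-quantizer variance ratio. Since $\mathbbm{Q}$ is locally constant in the clipped region $\{|x_k|>s_k\}$, a marginal perturbation of $x_k$ there does not change the loss, so the true back-propagated gradient satisfies $\Delta X_k = \Delta\mathbbm{Q}(X_k)\cdot\mathbbm{1}_{\{|X_k|\leq s_k\}}$, while the STE replaces the indicator by $1$. Assuming the downstream gradient $\Delta\mathbbm{Q}(X_k)$ is zero-mean and statistically independent of $X_k$ (standard in variance-propagation arguments), this yields
\begin{align*}
\text{Var}\!\left(\Delta X_k\right) &= P(|X_k|\leq s_k)\cdot\text{Var}\!\left(\Delta\mathbbm{Q}(X_k)\right),\\
\text{Var}\!\left(\Delta^{\text{(STE)}}X_k\right) &= \text{Var}\!\left(\Delta\mathbbm{Q}(X_k)\right),
\end{align*}
so the local STE-to-true variance ratio at any quantizer is exactly $1/P(|X_k|\leq s_k)$, which exceeds $1$ whenever clipping occurs with positive probability.

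Next I would propagate backwards through the affine and nonlinear operations that sit between consecutive quantizers. Because these components are identical under the two estimators, they scale the STE and true gradient variances by the same factor, leaving the local ratio invariant. Chaining across all $L-l$ quantizers between layer $l$ and the output and starting from equality at layer $L$ gives
\begin{align*}
\frac{\text{Var}\!\left(\Delta^{\text{(STE)}}X_l\right)}{\text{Var}\!\left(\Delta X_l\right)} = \prod_{k=l}^{L-1}\frac{1}{P(|X_k|\leq s_k)},
\end{align*}
and choosing $1+\delta=\min_k 1/P(|X_k|\leq s_k)$, which is strictly greater than $1$ provided at least one layer has a strictly positive clipping probability, yields the claimed lower bound $(1+\delta)^{L-l}$.

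The main obstacle lies in justifying the two idealizations underpinning the local step: (i) independence of the downstream gradient from the clipping indicator together with a zero mean for the downstream gradient, and (ii) identifying the derivative of $\mathbbm{Q}$ in the unclipped region with a simple passthrough of value $1$ rather than a sum of Dirac-like contributions at each quantization threshold. Both are the same approximations underlying \cite{heInit} and the QAT back-propagation analysis of \cite{sakrICLRfl}, making Proposition~\ref{prop:grad_explosion} a variance-level rather than pathwise statement. I would therefore declare these hypotheses explicitly at the start of the proof and remark that the mere existence of \emph{some} positive $\delta$ satisfying the bound is robust to mild deviations from these idealizations.
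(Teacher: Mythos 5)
Your proposal is correct and follows essentially the same route as the paper's proof in Appendix C: both decompose the variance ratio into a product of per-quantizer ratios, show each local ratio equals $1/\Pr(|X_i|\leq s_i)>1$ because the true derivative vanishes in the clipping region while the STE assigns unity there, and define $\delta$ as the minimum over layers. The idealizations you flag (independence/zero-mean of the downstream gradient and the unit passthrough in the discretization region for small step size) are exactly the ones the paper invokes implicitly via its accumulation constant $k_l$ and its small-step assumption.
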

\begin{proof}
The proof is provided in Appendix \ref{appendix:grad_explosion_proof}. The main insight is that STE carries excess variance due to its assigning unity to gradients of clipped weight (see Figure \ref{fig:derivatives_comparisons}).
\end{proof}

The result in Proposition \ref{prop:grad_explosion} highlights an exponential explosion of back-propagated STE gradients. In contrast,
the PWL estimator sets $\frac{\partial^{\text{(PWL)}}\mathbbm{Q}(x)}{x} = \mathbbm{1}_{x\in[-s,s]}$ and does not suffer from such gradient explosion. However, weight tensors trained using PWL encounter a partial stoppage of convergence, as early as the first training iteration. Early stopping is equivalent to model size reduction, which can impede the achievable accuracy. The following result holds.

\begin{proposition}[Convergence stoppage with PWL]
\label{prop:pwl_stoppage}
Given a statically clipped $N_{\vec{w}}$-element weight tensor $\vec{w}$, whose gradient is estimated using PWL, only $\tilde{N}^{(i)}_{\vec{w}}$ of its parameters are leaned at iteration $i$, and the following inequalities hold:
\begin{align}
    \label{eqn:pwl_stoppage}
    N_{\vec{w}} > \tilde{N}^{(i)}_{\vec{w}} \geq \tilde{N}^{(i+1)}_{\vec{w}}
\end{align}
\end{proposition}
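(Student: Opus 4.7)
The plan is to identify $\tilde{N}^{(i)}_{\vec{w}}$ with the count of weights whose magnitude at iteration $i$ falls inside PWL's linear region $[-s,s]$, and then argue that this count evolves monotonically because a weight cannot re-enter that region once it leaves. First I would formalize the identification: by the PWL definition $\partial^{\text{(PWL)}}\mathbbm{Q}(w)/\partial w=\mathbbm{1}_{\{|w|\leq s\}}$, the chain rule assigns a gradient of exactly zero to any weight with $|w^{(i)}|>s$ and a generically nonzero gradient to every other weight, so $\tilde{N}^{(i)}_{\vec{w}}=\#\{w\in\vec{w}:|w^{(i)}|\leq s\}$ coincides with the set of parameters that actually receive an update at iteration $i$.

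For the monotonicity $\tilde{N}^{(i)}_{\vec{w}}\geq\tilde{N}^{(i+1)}_{\vec{w}}$ I would track the complementary \emph{frozen} set. Any weight with $|w^{(i)}|>s$ has zero PWL gradient, so vanilla SGD leaves it untouched: $w^{(i+1)}=w^{(i)}$ and hence $|w^{(i+1)}|>s$ still. The set of weights outside $[-s,s]$ can therefore only grow across iterations, so the active count can only shrink. For the strict inequality $N_{\vec{w}}>\tilde{N}^{(i)}_{\vec{w}}$ I would invoke the standing premise that the static clipping is non-trivial, i.e., $s$ was calibrated so that at least one weight lies outside $[-s,s]$ at initialization; that weight then stays frozen at every subsequent iteration, giving $\tilde{N}^{(1)}_{\vec{w}}<N_{\vec{w}}$, and monotonicity propagates this to all $i\geq 1$.

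The main obstacle is handling optimizers richer than vanilla SGD, where momentum, Adam moments, or weight decay can continue moving a weight after its current gradient becomes zero. I would address this by adopting the strict reading that ``learned at iteration $i$'' means ``receives a nonzero back-propagated gradient at iteration $i$'', for which the chain-rule argument applies verbatim to every first-order optimizer. If one instead requires an actual change in parameter value, the frozen-once-outside claim still holds for plain SGD, and for momentum or weight decay a short supplementary observation closes the gap: the optimizer state attached to an outside weight can drive $|w|$ only through inertia or exponential decay, neither of which reinjects fresh loss information nor ever reactivates the PWL gradient, so the parameter is still unlearned in the sense relevant to convergence of training.
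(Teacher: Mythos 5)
Your proposal is correct and follows essentially the same route as the paper's proof in Appendix~\ref{appendix:pwl_stoppage_proof}: static clipping with $s<s_{\max}$ guarantees at least one weight outside $[-s,s]$ at initialization, such weights receive zero PWL gradient and are therefore frozen forever, and the set of frozen weights can only grow because an inside weight may be pushed out but an outside weight never returns. Your added remark on momentum/Adam/weight decay is a reasonable refinement the paper leaves implicit (it tacitly assumes vanilla SGD, where zero gradient means no update), but it does not change the argument.
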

\begin{proof}
The proof is provided in Appendix \ref{appendix:pwl_stoppage_proof}. The main insight is that PWL repeatedly zeroes out gradients of clipped weights (see Figure \ref{fig:derivatives_comparisons}), halting their updates.
\end{proof}
The monotonic decrease in Proposition \ref{prop:pwl_stoppage} requires static quantization. Nevertheless, dynamic quantization exhibits a similar, albeit milder, convergence stoppage, where the first strict inequality in \eqref{eqn:pwl_stoppage} also holds.



\subsection{Magnitude-aware Differentiation}
\label{sec:mad}
To formulate an improved gradient estimator, we first present a simple result: rather than treating clipping as a piece-wise selection, we write it as a \textit{magnitude attenuation}.
\begin{proposition}
\label{prop:mad_clipping}
The clipping operator is given by:
\begin{align}
    \label{eqn:mad_clipping}
    \text{clip}(x,-s,s) = \alpha\cdot x
\end{align}
where $\alpha = \mathbbm{1}_{\{|x|\leq s\}} + \frac{s}{|x|}\mathbbm{1}_{\{|x|>s\}}$  
\end{proposition}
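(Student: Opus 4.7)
The plan is to verify the claimed identity by a straightforward case analysis on the magnitude of $x$ relative to the clipping scalar $s$, exploiting the fact that the two indicator functions in $\alpha$ are mutually exclusive and jointly exhaustive over $\mathbb{R}$ (the point $|x|=s$ being unambiguous since both candidate expressions for $\alpha$ equal $1$ there).

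First I would split into the case $|x|\leq s$, in which $\alpha=1$ and hence $\alpha\cdot x = x$, which matches the middle branch of the piecewise clipping definition in \eqref{eqn:clipped_quantization_definition}. Then I would address the case $|x|>s$, in which $\alpha = s/|x|$, so that $\alpha\cdot x = sx/|x| = s\cdot\mathrm{sgn}(x)$. Evaluating the sign: when $x>s>0$ this yields $s$, and when $x<-s<0$ this yields $-s$, matching the outer two branches of the clipping definition exactly. Combining the two cases gives $\mathrm{clip}(x,-s,s)=\alpha\cdot x$ pointwise, completing the argument.

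There is no substantive obstacle here: the statement is essentially a convenient algebraic rewriting of the clipping operator as a scalar attenuation factor times the identity, and its value lies not in the proof itself but in enabling the magnitude-aware differentiation scheme to follow. The only mild subtlety worth flagging explicitly in writing is that the expression $s/|x|$ is well defined in the regime $|x|>s>0$ where it is used, so no division-by-zero issue arises.
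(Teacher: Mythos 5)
Your proposal is correct and follows essentially the same route as the paper's proof, which likewise just unfolds the indicator functions into their case definitions and checks the two regimes $|x|\leq s$ and $|x|>s$ against the piecewise form of the clipping operator in \eqref{eqn:clipped_quantization_definition}. Your explicit note that $s/|x|$ is only evaluated where $|x|>s>0$ is a small but welcome addition the paper leaves implicit.
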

\begin{proof}
The result can readily be obtained by replacing the indicator function by its definition, i.e.: \\
$\mathbbm{1}_{\{|x|\leq s\}} = (1-\mathbbm{1}_{\{|x|>s\}}) = \begin{cases}1 \quad \text{if } |x|\leq s\\
    0 \quad \text{if } |x|>s
\end{cases}$
\end{proof}
Using Proposition \ref{prop:mad_clipping}, we formulate the magnitude-aware derivative (MAD). Treating $\alpha$ as a constant in \eqref{eqn:mad_clipping}, we obtain:
\begin{align}
    \label{eqn:mad_derivative}
    \frac{\partial^{\text{(MAD)}}\mathbbm{Q}(x)}{x} =  \mathbbm{1}_{\{|x| \leq s\}} + \frac{s}{|x|}\mathbbm{1}_{\{|x|>s\}}
\end{align}

In Figure \ref{fig:derivatives_comparisons}, we plot the three gradient estimators: STE, PWL, and MAD, for $s=1$. They are identical in the discretization region. However, while PWL zeroes out the clipping region, MAD uses a magnitude-aware attenuation factor and is  continuous. Therefore, for a MAD-trained weight tensor $\vec{w}$, we do guarantee that $\tilde{N}^{(i)}_{\vec{w}} = N_{\vec{w}}$ at any iteration $i$, and there is no early stoppage of convergence. 

In some measure, PWL and MAD are similar. The former approximates $\frac{\partial \mathbbm{1}_{\{|x|\leq s\}}}{\partial s} = \frac{\partial \mathbbm{1}_{\{|x|>s\}}}{\partial s} = 0$. This style of approximation is predominant and useful in deep learning, e.g., it is used to train networks with ReLU-like activation functions. Similarly, MAD approximates a combination of indicator functions as being a constant to obtain a useful gradient estimator that evades the limitations of PWL.

Above, we have shown that MAD improves differentiation for quantized weights. There is no clear advantage to using MAD over PWL for activations. The only difference between the two is the occasional zeroing out of activation gradients under PWL. We argue that this is in fact desirable for potential regularization. Indeed, it mimics Dropout \cite{dropout} in the backward path. Thus, we generally recommend using MAD for weight gradients and PWL for activation gradients. In our experiments, we term such a combination MAD-PWL Hybrid (MPH).

{Finally, we note that MAD is different from  \emph{magnitude-aware} gradients in Bi-Real Net \cite{birealnet}, where a triangular pulse combined with an approximate sign operator estimates gradients for \emph{binarization only}. In contrast, and orthogonally, we formulate a derivative to clipped quantization by analyzing its magnitude attenuation effect.}

\section{Quantization-aware Training Studies}
\label{sec:experiments}
\begin{figure}[!t]
\begin{center}
    \includegraphics[width = 0.99\linewidth]{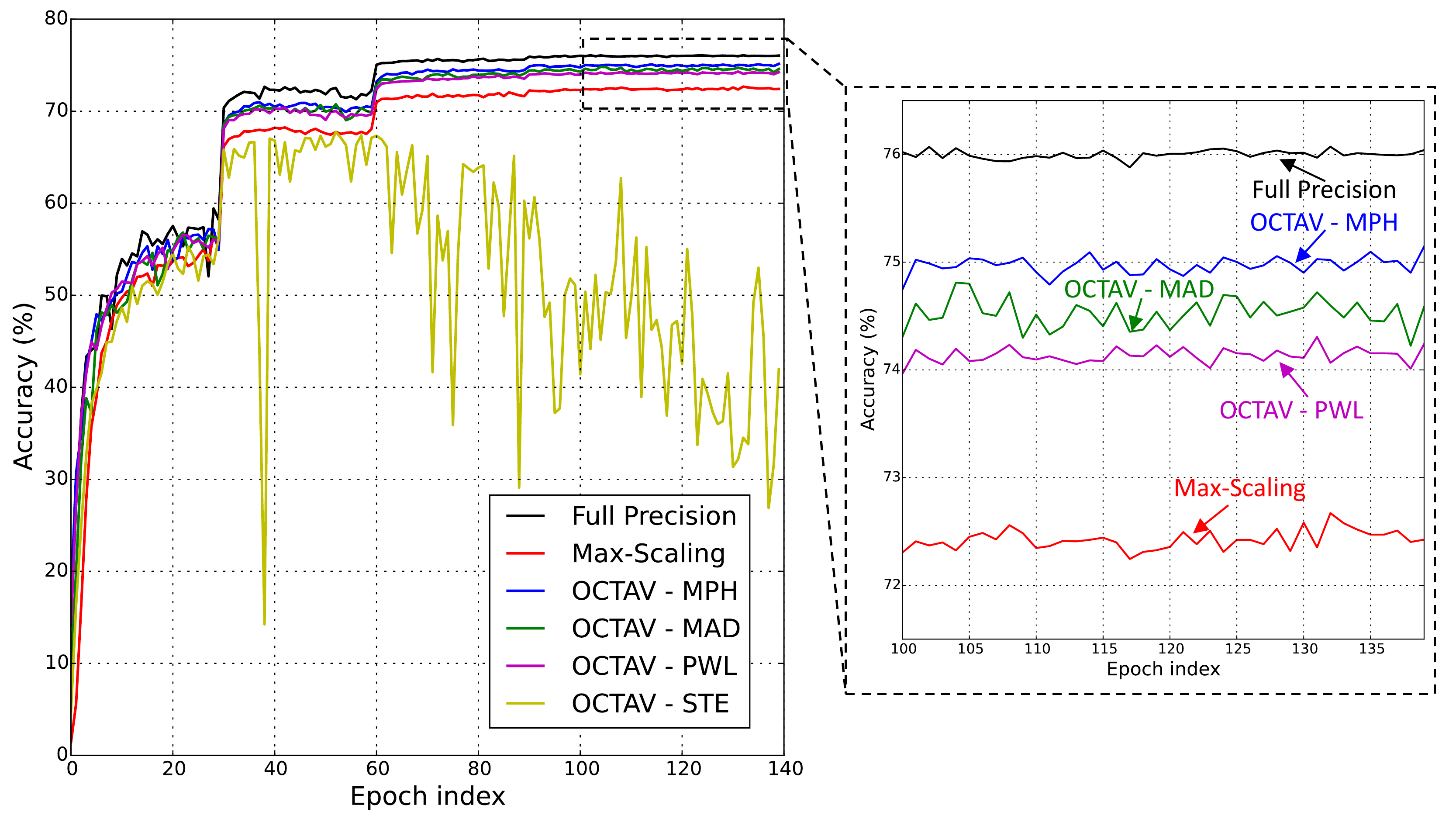}
\end{center}
\caption{Convergence curves for 4-bit ResNet-50 training-from-scratch including full precision baseline, max-scaling, and OCTAV-enabled QAT. For the last, various gradient estimation strategies discussed in Section \ref{sec:gradients} are included: STE, PWL, MAD, and MPH. }
\label{fig:resnet_50_convergence}
\end{figure}
\begin{table}[!t]
    \centering
    \small
    \caption{4-bit ResNet-50 training-from-scratch accuracy}
    \label{table:resnet50_gradient_ablation}
    
    \begin{tabular}{|c|c|c|c|c|c|}
    \hline
    \multirow{2}{*}{\specialcell{Full \\Precision}} & \multirow{2}{*}{\specialcell{Max-\\Scaling}} & \multicolumn{4}{c|}{OCTAV}\\
    \cline{3-6}
    & & STE&PWL&MAD&MPH\\
    \hline
    76.07&72.67&67.75&74.31&74.81&\textbf{75.15}\\ \hline
    \end{tabular}
\end{table}

\begin{table*}[!t]
    \centering
    \small
    \caption{Accuracies for training-from-scratch QAT on ImageNet} 
    \label{table:qat_from_scratch}
    \begin{tabular}{|c||c||c|c||c|c||c|c|}
    \hline
        \multirow{2}{*}{Network} & \multirow{2}{*}{\specialcell{Full Precision\\ Baseline}} & \multicolumn{2}{c||}{$B=4$}&\multicolumn{2}{c||}{$B=6$}&\multicolumn{2}{c|}{$B=8$} \\
        \cline{3-8}
        & & \textbf{OCTAV} & Max-Scaling & \textbf{OCTAV} & Max-Scaling & \textbf{OCTAV} & Max-Scaling\\
        \hline \hline
        ResNet-50 & 76.07 & \textbf{75.15} & 72.67 & 76.07 & 76.01 & 76.24 & 76.12 \\
        \hline
        ResNet-18 & 70.12 & \textbf{69.17} & 65.65 & 69.78 & 69.52 & 70.07 & 70.19 \\
        \hline
        ResNet-101 & 77.28 & \textbf{76.48} & 72.53 & 77.30 & 77.04 & 77.31 & 77.15\\
        \hline
        MobileNet-V2 & 71.71 & \textbf{70.88} & 69.17 & 71.64 & 71.79 & 71.71 & 71.77 \\
        \hline
        MobileNet-V3-Small & 65.99 & 54.68 & 0.39 & \textbf{65.02} & 60.17 & 65.98 & 65.14 \\
        \hline
        MobileNet-V3-Large & 72.97 & 65.86 & 1.25 & \textbf{72.12} & 69.38 & 72.89 & 72.78\\
        \hline
    \end{tabular}
\end{table*}
We conduct numerical experiments to show the impact of our proposed methods. We evaluate training-from-scratch and retraining QAT using ResNet \cite{resnet} and MobileNet \cite{mobilenetv2,mobilenetv3} models deployed on the ImageNet \cite{imagenet} dataset for image classification. For fine-tuning QAT, we use BERT \cite{bert} language models pretrained on the Wikipedia \cite{wikipedia} and BookCorpus \cite{bookcorpus} datasets and fine-tuned on Squad v1.1 \cite{squad} for question-answering.
Our implementations are derived from the NVIDIA `Deep Learning Examples' 
repository \footnote{Code retrieved from: \url{https://github.com/NVIDIA/DeepLearningExamples}.}. All details are in Appendix \ref{appendix:implementations_details}.


\subsection{Training-from-scratch QAT on ImageNet}
\label{sec:training_from_scratch}

To get started, we single out 4-bit ResNet-50 training-from-scratch to point out various aspects of our results. In Figure \ref{fig:resnet_50_convergence}, we compare convergence of test accuracy for the full precision baseline, max-scaled QAT, and OCTAV-enabled QAT, including gradient estimation options from Section \ref{sec:gradients}. 

Conforming to the gradient explosion prediction in Proposition \ref{prop:grad_explosion}, using the STE to differentiate the clipped quantization operation leads to a clear training instability. Furthermore, using PWL is clearly inferior to MAD, which confirms our analysis on early stoppage of convergence for the former and justifies our proposal for the latter. We obtained a marginal improvement in accuracy over MAD with the MPH scheme, due to its better regularization property described at the end of Section \ref{sec:mad}. All further clipped QAT results use MPH.

Table \ref{table:resnet50_gradient_ablation} summarizes ResNet-50 results and lists achieved accuracy for various schemes considered. We find that OCTAV-enabled QAT improves on max-scaling by $\sim$2.5$\%$ and achieves a less-than-1\% accuracy drop compared to the full precision baseline. This by itself matches the current state-of-the-art in 4-bit QAT \cite{pact}. 
We emphasize the significance of this result by recalling that no modification to the training recipe was required, such as adding learned parameters or hyperparameter tuning.

Additional training-from-scratch experiments are reported in Table \ref{table:qat_from_scratch} for various networks and precisions. We consider 4-bit, 6-bit, and 8-bit QAT for: ResNet-50, ResNet-18, ResNet-101, MobileNet-V2, MobileNet-V3-Small, and MobileNet-V3-Large. We highlight results yielding high accuracy at low-precision. 

For ResNets, 6-bit is enough to achieve close-to-baseline accuracy, even with max-scaling. However, at 4-bit, OCTAV is required to remain within 1\% of the baseline while max-scaling leads to a drop in accuracy of up to $\sim$5$\%$.

Similar trends are observed for MobileNet-V2; we find this network easier to quantize compared to MobileNet-V3. 
We speculate that this is due to the former using ReLU6 activations as opposed to the latter, which uses the Hardswish function \cite{mobilenetv3}.

Max-scaled QAT of MobileNet-V3 leads to a large drop in accuracy of $\sim$ 3-to-5 \% at 6-bit, and further quantization to 4-bit {fails to converge} and results in near zero accuracy.
In contrast, OCTAV-enabled QAT of MobileNet-V3 is successful at 6-bit. A noticeable, but not total, accuracy drop at 4-bit leaves room for improvement. We speculate that accuracy can be recovered through a QAT-friendly training recipe, such as distillation \cite{profitECCV2020}, but this is beyond the scope of our work.

\subsection{Retraining ImageNet networks at 4-bit}
\label{sec:re_training}

\begin{table*}[!t]
    \centering
    \small
    \caption{Accuracies for {short} 4-bit retraining QAT on ImageNet}
    \label{table:retraining_qat}
    \begin{tabular}{|c||c|c||c|c|c|c|c|}
    \hline
        \multirow{2}{*}{Network} & \multicolumn{2}{c||}{Dynamic Quantization} & \multicolumn{5}{c|}{Static Quantization} \\
        \cline{2-8}
        & \textbf{OCTAV} & Max-Scaling & \textbf{OCTAV} & MSE Sweep & $99.9^{\text{th}}$ Perc. & $99.99^{\text{th}}$ Perc. & $99.999^{\text{th}}$ Perc.\\
        \hline \hline
        ResNet-50 & \textbf{75.38} & 71.44 & \textbf{75.84} & 75.85 & 75.66 & 75.51 & 75.29 \\
        \hline
        ResNet-18 & \textbf{69.16} & 65.53 & \textbf{69.18} & 69.28 & 69.04 & 69.08 & 68.93 \\
        \hline
        ResNet-101 & \textbf{76.10} & 70.88 & \textbf{76.96} & 77.01 & 76.79 & 76.99 & 76.34\\
        \hline
        MobileNet-V2 & \textbf{69.32} & 66.94 & 0.66 & 0.93 & 1.72 & 2.14 & 2.76 \\
        \hline
        MobileNet-V3-Small & \textbf{53.52} & 0.10 & 0.43 & 0.58 & 0.65 & 0.10 & 1.46 \\
        \hline
        MobileNet-V3-Large & \textbf{64.97} & 39.46 & 0.39 & 0.30 & 0.34 & 0.71 & 0.57\\
        \hline
    \end{tabular}
\end{table*}

\begin{table}[!t]
    \centering
    \small
    \caption{{Accuracies for long 4-bit retraining QAT on ImageNet using OCTAV}}
    \label{table:retraining_qat_full}
    \begin{tabular}{|c||c|c|}
    \hline
        Network & Dynamic-OCTAV & Static-OCTAV \\
        \hline \hline
        ResNet-50 & \textbf{76.21} & \textbf{76.46} \\
        \hline
        ResNet-18 & \textbf{69.90} & \textbf{70.13} \\
        \hline
        ResNet-101 & \textbf{76.84} & \textbf{77.48} \\
        \hline
        MobileNet-V2 & \textbf{71.23} & 1.21 \\
        \hline
        MobileNet-V3-Small & \textbf{58.93} & 0.80 \\
        \hline
        MobileNet-V3-Large & \textbf{69.21} & 0.60 \\
        \hline
    \end{tabular}
\end{table}
We also study 4-bit retraining of the aforementioned networks. We use a \emph{shortened} version of the same training recipe, with details included in Appendix \ref{appendix:implementations_details}. We include static quantization in our results and follow similar methods as \cite{wu2020integer}. We report results for $99.9^{\text{th}}$, $99.99^{\text{th}}$, and $99.999^{\text{th}}$ percentile calibration, which were claimed to work well in the retraining setup. {For these experiments, Resnet-50 and ResNet-101 are retrained for 15 epochs, while other networks are retrained for 30 epochs.}

We also include static-OCTAV, which calibrates clipping scalars using \eqref{eqn:octav_corr}. As calibration can be performed offline, we also include results when using a 100-point brute force MSE sweep to set the clipping scalars.

Retraining results are listed in Table \ref{table:retraining_qat} and promising OCTAV results are highlighted. A clear trend is observed. Large models, such as ResNets, are much easier to retrain compared to small models, such as MobileNets. This finding is consistent with recent literature \cite{dboukEECV2020} and leads to several novel conclusions.

For large models, static quantization is most  suitable, though OCTAV-enabled dynamic quantization also yields high accuracy, within $\sim$1$\%$ of the baseline. As large models are easy to quantize, we speculate that a low-precision solution exists near the pretrained starting point. When an aspect of the parameters (the clipping scalars) is forced to be static, the model rapidly settles around a close solution to this starting point. In the case of ResNets, this solution is highly accurate. Highest accuracy is consistently achieved using static-OCTAV and the MSE sweep, both similar as expected. 

For small models, such as MobileNets, static quantization is unfit, yielding near-zero accuracy. For such models, a good quantized solution is unlikely to be close to the pretrained starting point. Thus, retraining requires dynamic quantization to track changes in tensor distributions occurring as the model adapts to low precision. Furthermore, OCTAV is found to be far superior to other strategies and can recover some accuracy across all MobileNets.

Interestingly, we find that retraining can be less accurate than training-from-scratch. For MobileNet-V2, OCTAV reaches  69.32\% for the former and 70.88\% for the latter. However, this is due to the shorter training time of 30 epochs used in retraining. When equalizing QAT time and retraining for 300 epochs using OCTAV, we obtain an accuracy of \textbf{71.23}\%. Thus, using a pretrained starting-point has some merits in spite of the large amounts of quantization noise suffered by MobileNets at low precision. 

{To further explore the potential of OCTAV and to understand the merits of retraining, we also perform \emph{long} 4-bit retraining of all ImageNet models considered. We retrain with OCTAV both dynamically and using static calibration, following the above setup. The full long retraining recipes are identical to those employed for training-from-scratch experiments in Section \ref{sec:experiments}.1 but for two aspects. First, the starting point is the pretrained model in full precision; and, second, the starting learning rate value is attenuated by a factor of $10^{-2}$ compared to that of the training-from-scratch recipe. For these experiments, ResNets and MobileNets are retrained for 150 and 300 epochs, respectively.}

{Accuracies for long 4-bit retraining QAT are reported in Table \ref{table:retraining_qat_full} and promising results are highlighted. Compared to results in Table \ref{table:retraining_qat}, long retraining always leads to a noticeable improvement, which establishes the merits of prolonged training. Consistent with earlier observations, static-OCTAV is found to yield highest accuracy for ResNets, but suffers catastrophic degradation on MobileNets. Similarly, dynamic-OCTAV once more yields high accuracy for all networks.}

{For ResNets, the achieved accuracy is either close to, or superior to that of the full precision baseline. This indicates that OCTAV's quantization effects are so small and fundamentally fall below the training algorithm's inherent noise floor. Our results are comparable in absolute terms to the state-of-the-art reported accuracies for 4-bit QAT on ResNets established using learned quantization such as PACT \cite{pact} and LSQ \cite{LSQ}. Comparing relative accuracy to the starting full precision baseline, our results improve the state-of-the-art. For instance, for ResNet-50, 4-bit OCTAV improves the baseline accuracy by $\sim$0.4$\%$ while 4-bit LSQ degrades it by $\sim$0.2$\%$.}

{Trends of long MoblileNet retraining are similar to those previously observed for training-from-scratch and short retraining. Failure of static calibration, in spite of prolonged training time, illustrates the importance of tracking tensor statistics when retraining small models. The improvement in final accuracy under dynamic-OCTAV is promising but not enough in the case of Mobilenet-V3. As mentioned in Section \ref{sec:experiments}.1, further QAT techniques may be required to achieve close-to-baseline accuracy.}

\subsection{Fine-tuning QAT of BERT Models on Squad}
\label{sec:fine_tuning}

\begin{table*}[!t]
    \centering
    \small
    \caption{Accuracies for fine-tuning BERT-Base \& BERT-Large on Squad v1.1}
    \label{table:bert_combined}
    \begin{tabular}{|c|c||c|c|c|c|c||c|c|c|c|c|}
    \hline
    \multicolumn{2}{|c||}{Network} & \multicolumn{5}{c||}{BERT-Large (Baseline Accuracy: 91.00)} & \multicolumn{5}{c|}{BERT-Base (Baseline Accuracy: 88.24)}\\
    \hline 
    \multicolumn{2}{|c||}{\# of bits $B$} & 4& 5& 6& 7& 8& 4& 5& 6& 7& 8\\
    \hline \hline
    \multirow{2}{*}{\specialcell{Dynamic\\ Quantization}}& \textbf{OCTAV} & \textbf{87.09}& \textbf{89.77}& \textbf{90.51}& \textbf{90.81}& \textbf{90.78}& \textbf{84.51}& \textbf{86.30}& \textbf{87.43}& \textbf{88.28}& \textbf{88.34} \\
    \cline{2-12}
    &Max-Scaling &6.92& 80.06& 87.71& 90.04& 90.48& 11.51& 78.97& 85.17& 87.46& 88.01\\
    \hline \hline
    \multirow{5}{*}{\specialcell{Static\\ Quantization }}& \textbf{OCTAV} &\textbf{87.08}& \textbf{89.54}& \textbf{90.60}& \textbf{90.79}& \textbf{90.61}& \textbf{83.60}& \textbf{85.82}& \textbf{87.14}& \textbf{87.67}& \textbf{88.02} \\
    \cline{2-12}
    &MSE Sweep & 85.54& 89.77& 90.39& 90.80& 90.55& 81.82& 84.16& 87.14& 87.68& 87.97\\
    \cline{2-12}
    &$99.9^{\text{th}}$ Perc. &86.98& 89.79& 89.99& 90.07& 90.11& 81.06& 85.78& 86.73& 86.84& 87.34\\
    \cline{2-12}
    &$99.99^{\text{th}}$ Perc. &6.90& 87.63& 90.38& 90.79& 90.33& 67.90& 83.20& 86.78& 87.60& 87.94\\
    \cline{2-12}
    &$99.999^{\text{th}}$ Perc. &4.56& 5.66& 89.76& 90.44& 90.83& 26.85& 82.15& 86.27& 87.51& 88.08\\
    \hline
    \end{tabular}
\end{table*}
Finally, we study fine-tuning QAT of BERT-Base and BERT-Large on Squad v1.1. As research on low-precision transformer networks is still in the early stages, we study QAT using every bit-width from 4 to 8 bits to understand the difficulties in quantizing these networks. 

We employ the same strategies of dynamic and static quantization as were used for retraining in Section \ref{sec:re_training}. For BERT-Base, we compare calibration times on a CPU when using OCTAV and the MSE sweep. Details are included in Appendix \ref{appendix:timing_comparison} and the former is found to be $\sim$10$\times$ faster.

Fine-tuning results are included in Table \ref{table:bert_combined} where various F1 scores are reported, and OCTAV results are highlighted.

For both networks, 7-bit or more is enough to match the baseline accuracy, regardless of the quantization strategy. At 6-bit, only dynamic-OCTAV yields an accuracy within 1\% of the baseline for the two networks. At lower precision, dynamic-OCTAV exhibits the most graceful degradation in accuracy, reaching a drop of $\sim$4$\%$ for both networks at 4-bit. As fine-tuning consists of re-adapting the model to a new task, it is reasonable to expect changes in tensor statistics, warranting the use of dynamic quantization for tracking. Thus, dynamic-OCTAV being consistently superior to all other strategies is expected.

Static quantization performs generally well, and static-OCTAV is clearly its best candidate. Its accuracy gracefully degrades to a 4-to-5 \% drop compared to the baseline. A surprising result is the definite superiority of static-OCTAV compared to the MSE sweep. It turns out that, for a few activation layers, the clipped quantization MSE is not convex, leading to divergent solutions for the two strategies. This issue occurs due to the presence of large outliers. The local minimum closest to zero is selected by OCTAV while the MSE sweep chooses to zero out all small values. This phenomenon is described in Appendix \ref{appendix:mse_convexity}. The better choice made by OCTAV  leads to $\sim$1.5$\%$ better accuracy compared to the sweep at 4-bit.

\section{Discussion}
\subsection{Current limitations and directions for future work}
We have shown, analytically and empirically, that OCTAV-enabled QAT improves accuracy of low-precision training without requiring modifications to the learning algorithm. As efforts to scale down DNN precision continue, an interesting avenue of future research is to combine OCTAV with quantization-dedicated training recipes, such as distillation, to increase accuracy even further. 

In addition, while the OCTAV overhead is at least an order of magnitude lower than an equivalent sweep, it is still pronounced. Research to reduce this overhead is needed to accelerate FQT in an accuracy-optimal manner. Similarly, for DNN inference acceleration, often hindered by dynamic quantization, techniques to match OCTAV accuracy in the static setup are desired. Our proposed static-OCTAV calibration strategy is one step in that direction.

Finally, the theoretical technique employed in this work may be applied to complexity reduction beyond quantization. We have formulated quantization noise as an objective function to be minimized on the fly using the Newton-Raphson algorithm. Similarly, other hardware-aware models, such as those for sparsification, can be rapidly optimized for reduced complexity. Such work can be impactful in the context of neural architecture search for hardware-efficient DNNs.
\subsection{Conclusion}
We have proposed OCTAV for enabling QAT with minimal quantization noise for each tensor at every iteration. We have also analyzed current clipped gradient estimators and proposed magnitude-aware differentiation as a tool to further improve QAT. Empirically, we have demonstrated that our methods lead to state-of-the-art accuracy in low-precision training of DNNs, without modifying the learning algorithm. Our contributions are an important step in the advancement of low-complexity deep learning.

\subsection*{Acknowledgement}
{The authors wish to thank Ben Keller and Hao Wu from NVIDIA for useful discussions.}

\bibliography{refs}
\bibliographystyle{icml2022}
\newpage
\begin{center}\framebox[1.1\width][c]{\Large Supplementary Material}\end{center}
\appendix
The main section of our paper contains all necessary content to understand our work. This supplementary section is provided to the interested reader as an avenue to go further. Here, we include an extension of all mathematical results to the unsigned quantization case, proofs of various theorems and propositions, implementation details needed to reproduce our experimental results, timing comparisons between OCTAV and the MSE sweep, and an investigation on the occurrence of non-convex clipped quantization MSE.
\section{Results for Unsigned Quantization}
\label{appendix:unsigned}
All mathematical results presented in the main paper made the assumption that any data being quantized was signed, i.e., lying in $[-s_{\max},s_{\max}]$. A study of the unsigned case is also relevant, as unsigned activations (such as ReLU) are commonly employed. Hereafter, we list all required modifications to our mathematical results for the unsigned case, i.e., quantization over $[0,s_{\max}]$.

We start with the max-scaled quantization operation, which, rather than \eqref{eqn:max_scaled_quantization}, is given by:
\begin{align*}
    \mathbbm{Q}(x) ={s_{\max}} \cdot 2^{-B}\cdot \text{round}\left({x\cdot2^{B}}/{s_{\max}} \right)
\end{align*}
and its MSE, derived from an additive model of quantization noise is given by $J=s_{\max}^2\frac{4^{-B}}{12}$. Effectively, when data is unsigned, there is no sign bit, and instead an additional least-significant-bit. Thus, for an identical value of $s_{\max}$, an unsigned quantizer has a twice smaller quantization step and consequently a $4\times$ smaller MSE than its signed counterpart.

Similarly, the unsigned clipped quantization operation, using a clipping scalar $s<s_{\max}$ is given by:
\begin{align*}
    \mathbbm{Q}(x) &=  \text{Rclip}\left({s} \cdot 2^{-B}\cdot\text{round}\left({x\cdot2^{B}}/{s}\right),s\right) \nonumber \\
    &=\begin{cases}{s} \cdot 2^{-B}\cdot
    \text{round}\left({x\cdot2^{B}}/{s}\right) \quad \text{if } x\in[0,s]\\
    s \quad \text{if } x>s
    \end{cases}
\end{align*}
where $\text{Rclip}()$ is the right clipping operator used in lieu of the usual two-sided clipping operator. Further, the unsigned clipped quantization MSE can be re-derived, and rather than \eqref{eqn:clipped_quantization_mse}, it is given by:
\begin{align*}
    J(s) = \frac{4^{-B}}{12}s^2\int_0^sf_{X}(x)dx + \int_s^{\infty}(s-x)^2f_{X}(x)dx
\end{align*}
where we note the factor of 12 rather than 3 in the first term and the use of $f_X()$ rather than $f_{|X|}()$; since the data is unsigned, there is no need to consider absolute values.

With the above expression for the unsigned clipped quantization MSE, Theorem \ref{thm:octav} can be re-derived in the same manner as Appendix \ref{appendix:octav_proof} below, and it can be readily shown that the recursion in \eqref{eqn:octav_theorem} is modified to:
\begin{align*}
    s_{n+1} = \frac{\mathbbm{E}\left[X\cdot \mathbbm{1}_{\{X>s_n\}} \right]}{\frac{4^{-B}}{12}\mathbbm{E}\left[ \mathbbm{1}_{\{X\leq s_n\}}\right] +\mathbbm{E}\left[ \mathbbm{1}_{\{X>s_n\}}\right]}
\end{align*}

and similarly, the recursion \eqref{eqn:octav_corr} in Corollary \ref{corr:octav} is modified to:
\begin{align*}
    &s_{n+1} =
    \frac{\sum_{x\in\vec{t}}\left[x\cdot \mathbbm{1}_{\{x>s_n\}} \right]}{\frac{4^{-B}}{12}\sum_{x\in\vec{t}}\left[ \mathbbm{1}_{\{0<x\leq s_n\}}\right] +\sum_{x\in\vec{t}}\left[ \mathbbm{1}_{\{x>s_n\}}\right]}
\end{align*}
where in both cases, we note the use of 12 rather than 3 in the first denominator term, and the lack of need for absolute values.

Finally, we append to Proposition \ref{prop:mad_clipping} the result related to the right clipping operator, which is given by:
\begin{align*}
    \text{Rclip}(x,s) = \left(\mathbbm{1}_{\{x\leq s\}} + \frac{s}{x}\mathbbm{1}_{\{x>s\}}\right)\cdot x
\end{align*}
and, consequently, for unsigned activations, rather than \eqref{eqn:mad_derivative}, the MAD estimator is given by:
\begin{align*}
    \frac{\partial^{\text{(MAD)}}\mathbbm{Q}(x)}{x} =  \mathbbm{1}_{\{x \leq s\}} + \frac{s}{x}\mathbbm{1}_{\{x>s\}}
\end{align*}
with the only modification made being the removal of absolute values.

\section{Proof of Theorem \ref{thm:octav}}
\label{appendix:octav_proof}
Our strategy is to use the Newton-Raphson algorithm. The algorithm consists of selecting a random guess $s_1$ and iteratively computing $\{s_n\}_{n>1}$ until convergence using the following recursion:
\begin{align}
\label{eqn:newton_raphson}
s_{n+1} = s_n - \frac{J'(s_n)}{J''(s_n)}
\end{align}
This recursion requires expressions for the first and second derivatives of the clipped quantization MSE $J(s)$. Obtaining closed form expressions for these derivatives is possible, but complicated. Thus, we first re-write the clipped quantization MSE in \eqref{eqn:clipped_quantization_mse} as follows:
\begin{align}
\label{eqn:mse_indicator}
    J(s) = \frac{4^{-B}}3s^2\mathbbm{E}\left[\mathbbm{1}_{\{|X|\leq s\}} \right] + \mathbbm{E}\left[(s-|X|)^2\mathbbm{1}_{\{|X|>s\}} \right]
\end{align}
No approximation was made. Indeed, \eqref{eqn:clipped_quantization_mse} and \eqref{eqn:mse_indicator} are identical by virtue of the relationship between integrals of distributions and expected values of indicator functions. More generally, we used:
\begin{align*}
    \int_{\mathcal{R}} g(x)f_X(x)dx = \mathbbm{E}\left[g(X)\mathbbm{1}_{\{X\in\mathcal{R}\}} \right]
\end{align*}
for a given interval $\mathcal{R}$ and function $g()$. We now note that the expression for $J(s)$ in \eqref{eqn:mse_indicator} is easier to differentiate. Nonetheless, it requires differentiating the indicator function ($\mathbbm{1}_{\{|X|\leq s\}}$ and $\mathbbm{1}_{\{|X|>s\}}$). This function is technically non-differentiable; however, it is customary in deep learning to treat it as a piecewise linear function. For instance, such is the way the ReLU function is handled for training neural networks using the back-propagation algorithm. In our case, we do make the piecewise linear approximation to obtain: $\frac{\partial \mathbbm{1}_{\{|X|\leq s\}}}{\partial s} = \frac{\partial \mathbbm{1}_{\{|X|>s\}}}{\partial s} = 0$. Because the derivative is taken inside an expectation operation in \eqref{eqn:mse_indicator}, this approximation is generally valid. The validity of the approximation would fail if one of the candidates $\{s_n\}_{n>1}$ coincides with a point of positive mass. Thus, to be certain that the algorithm converges correctly, one requirement on the distribution $f_X()$ would be to have no points of positive mass in the vicinity of $s_1$. This is by no means a strong condition, and we expect all data of interest to satisfy this condition anyway. Next, we simply compute the first and second order derivatives of $J(s)$ in \eqref{eqn:mse_indicator} to obtain:
\begin{align*}
    &J'(s) = \frac{2\cdot 4^{-B}}{3}s\mathbbm{E}\left[\mathbbm{1}_{\{|X|\leq s\}} \right] + 2\mathbbm{E}\left[(s-|X|)\mathbbm{1}_{\{|X|>s\}} \right] \\
    &J''(s) = \frac{2\cdot 4^{-B}}{3}\mathbbm{E}\left[\mathbbm{1}_{\{|X|\leq s\}} \right] + 2\mathbbm{E}\left[\mathbbm{1}_{\{|X|>s\}} \right]
\end{align*}
which we insert into \eqref{eqn:newton_raphson} to obtain \eqref{eqn:octav_theorem}. Observe that the second derivate $J''(s)$ is a positive quantity, proving the convexity of $J(s)$ and justifying the use of the Newton-Raphson method. It might be useful to notice the following:
\begin{align*}
    J'(s) = s\cdot J''(s) - 2\mathbbm{E}\left[|X|\cdot \mathbbm{1}_{\{|X|>s_n\}} \right]
\end{align*}

\section{Proof of Proposition \ref{prop:grad_explosion}}
\label{appendix:grad_explosion_proof}
Note that $\text{Var}\left(\Delta X_l \right) = k_l \mathbbm{E}\left[\left|\frac{\partial \mathbbm{Q}(X_l)}{\partial X_l} \right|^2 \right]$ where $k_l$ accumulates the back-propagated gradient variance until the quantization step at layer $l$. It follows that:
\begin{align*}
    \frac{\text{Var}\left(\Delta^{\text{(STE)}}X_l \right)}{\text{Var}\left(\Delta X_l \right)} = \prod_{i=l}^L\frac{\mathbbm{E}\left[\left|\frac{\partial^{\text{(STE)}} \mathbbm{Q}(X_i)}{\partial X_i} \right|^2 \right]}{\mathbbm{E}\left[\left|\frac{\partial \mathbbm{Q}(X_i)}{\partial X_i} \right|^2 \right]}
\end{align*}
Indeed, at every layer, the gradient estimator replaces the true gradient. Further, note that $\mathbbm{E}\left[\left|\frac{\partial^{\text{(STE)}} \mathbbm{Q}(X_i)}{\partial X_i} \right|^2 \right]=1$ by definition of the STE (see Section \ref{sec:gradients}). By virtue of the law of total expectation, we have:
\begin{align*}
    &\mathbbm{E}\left[\left|\frac{\partial \mathbbm{Q}(X_i)}{\partial X_i} \right|^2 \right] \\ &= \mathbbm{E}\left[\left|\frac{\partial \mathbbm{Q}(X_i)}{\partial X_i} \right|^2 \Bigg| |X_i|\leq s_i \right]\Pr\left(|X_i|\leq s_i\right) \\ & + \mathbbm{E}\left[\left|\frac{\partial \mathbbm{Q}(X_i)}{\partial X_i} \right|^2 \Bigg| |X_i|>s_i \right]\Pr\left(|X_i|>s_i\right)
\end{align*}
In the discretization region, and for sufficiently small quantization step, which is in line with our paper's assumptions, we have:
$\mathbbm{E}\left[\left|\frac{\partial \mathbbm{Q}(X_i)}{\partial X_i} \right|^2 \Bigg| |X_i|\leq s_i \right]=1$. In contrast, in the clipping regions, we have $\mathbbm{E}\left[\left|\frac{\partial \mathbbm{Q}(X_i)}{\partial X_i} \right|^2 \Bigg| |X_i|>s_i \right]=0$. We obtain:
\begin{align*}
    \mathbbm{E}\left[\left|\frac{\partial \mathbbm{Q}(X_i)}{\partial X_i} \right|^2 \right] &= \Pr\left(|X_i|\leq s_i\right) \\&< 1 = \mathbbm{E}\left[\left|\frac{\partial^{\text{(STE)}} \mathbbm{Q}(X_i)}{\partial X_i} \right|^2 \right]
\end{align*}
The strict inequality occurs when using any clipped quantizer, but not the max-scaled quantizer. Thus, we obtain:
\begin{align*}
    \frac{\mathbbm{E}\left[\left|\frac{\partial^{\text{(STE)}} \mathbbm{Q}(X_i)}{\partial X_i} \right|^2 \right]}{\mathbbm{E}\left[\left|\frac{\partial \mathbbm{Q}(X_i)}{\partial X_i} \right|^2 \right]} = 1+\delta_i
\end{align*}
where $\delta_i>0$. We assign $\delta = \min_{i=1\ldots L}\delta_i$ to obtain the desired result in \eqref{eqn:grad_explosion} of Proposition \ref{prop:grad_explosion}.

\section{Proof of Proposition \ref{prop:pwl_stoppage}}
\label{appendix:pwl_stoppage_proof}
In Proposition \ref{prop:pwl_stoppage}, static clipped quantization is assumed, thus, the quantization scalar satisfies $s<s_{\max}$. It follows that, there exists at least one weight element $w_{\circ} \in \vec{w}$ such that its initial value at iteration 1 satisfies $\left|w_{\circ}^{(1)}\right|>s$. Thus, its first PWL gradient estimate at iteration 1 is given by: $\Delta^{\text{(PWL)}}w_{\circ}^{(1)} = 0$. Consequently, $w_{\circ}^{(2)}=w_{\circ}^{(1)}$ and $\Delta^{\text{(PWL)}}w_{\circ}^{(2)} = 0$. More generally, we may use induction to obtain that $\forall i, \Delta^{\text{(PWL)}}w_{\circ}^{(i)} = 0$ and the weight value is 'stuck at initialization', i.e., $w_{\circ}^{(i)}=w_{\circ}^{(1)}$. This means that $w_{\circ}$ is never learned and therefore $N_{\vec{w}} > \tilde{N}^{(i)}_{\vec{w}}$, which is the first inequality in Proposition \ref{prop:pwl_stoppage}. This first part of the result also applies to dynamic quantization. Furthermore, if at iteration $i$ there is at least one $w_* \in \vec{w}$ such that $\left|w_*^{(i)}\right|<s$ and  $\Delta^{\text{(PWL)}}w_{*}^{(i)}$ causes the next iteration weight magnitude to be $\left|w_*^{(i+1)}\right|>s$, then $\tilde{N}^{(i)}_{\vec{w}} > \tilde{N}^{(i+1)}_{\vec{w}}$. If no such weight exists, then $\tilde{N}^{(i)}_{\vec{w}} = \tilde{N}^{(i+1)}_{\vec{w}}$. This completes the proof of the second inequality in Proposition \ref{prop:pwl_stoppage}. At any iteration $i$, at best the number of learnable parameters remains constant, and at worst, it decreases.

\section{Experimental Implementations Details}
\label{appendix:implementations_details}
In this section, we discuss all details behind our implementations in Section \ref{sec:experiments}. These include training recipes, tensor quantization specifics, and static quantization calibration details. All experiments were implemented using DGX-1 Volta machines. 
\subsection{Baseline Training Recipes}
The following recipes were used for obtaining ImageNet \textbf{training-from-scratch} results included in Section \ref{sec:training_from_scratch}. All models were trained using momentum SGD using 8 V100 GPUs. The following parameters are the defaults suggested in the `Deep Learning Examples' repository \footnote{Code retrieved from \url{https://github.com/NVIDIA/DeepLearningExamples/tree/master/PyTorch/Classification/ConvNets}.}.

\textbf{ResNet} training used the following parameters: an initial learning of 0.1, a per-GPU batch-size of 64, a momentum factor of 0.9, a weight decay factor of 1e-4, and a learning rate decay factor of 0.1 every 30 epochs. ResNet-50 and ResNet-18 were trained for 150 epochs, while ResNet-101 was trained for 80 epochs.

\textbf{MobileNet} training used the following parameters: an initial learning rate of 0.2, a per-GPU batch-size of 128, a momentum factor of 0.9, a learning weight decay factor of 0.1 every 100 epochs, and a total number of training epochs of 300. Weight decay factors of 4e-5 and 1e-5 were used for MobileNet-V2 and MobileNet-V3, respectively.

Shortened versions of the above recipes were used for \textbf{retraining} discussed in Section \ref{sec:re_training}. Furthermore, models were retrained using 4 V100 GPUs. More specifically, ResNets and MobileNets were retrained using an initial learning rate of 1e-4 and 1e-2, respectively. ResNet-50 and ResNet-101 were retrained for 15 epochs with a learning rate decay of 0.1 every 5 epochs. All other models were retrained for 30 epochs with a learning rate decay of 0.1 every 10 epochs. 

For \textbf{fine-tuning} of BERT models on Squad, we used 1 V100 GPU and referred to the default parameters suggested in the `Deep Learning Examples' repository \footnote{Code retrieved from \url{https://github.com/NVIDIA/DeepLearningExamples/tree/master/PyTorch/LanguageModeling/BERT}.}. Specifically, we fine-tune for 2 epochs and use: a batch of 4, a learning rate of 3e-5, and a maximum sequence length of 384.

\subsection{Tensor Quantization Specifics}
It is important to explicitly describe various options for quantization granularity. The following are commonly employed methods:
\begin{itemize}[leftmargin=*,itemsep=0.1\baselineskip,topsep=0pt]
    \item \textbf{Tensor scaling}, where one scalar $s$ is used to quantize all elements in a tensor $\vec{t}$.
    \item \textbf{Per-output-channel (POC) scaling}, which applies to weights in convolutional layers. A convolutional weight tensor $\vec{w}$ has a dimensionality of $(K,C,R,S)$ where $K$ and $C$ are the output and input number of channels, respectively, and $R$ and $S$ are the kernel height and width, respectively. A vector $\vec{s}$ of $K$ scalars is used to quantize $C\times R\times S$ elements at a time. Specifically, the tensor $\vec{w}$ is flattened into a matrix of dimensionality $(K,C\times R\times S)$ and every row vector is quantized using the scalar in $\vec{s}$ of appropriate index.
    \item \textbf{Per-output-feature (POF) scaling}, which applies to weights in linear (fully connected) layers. A weight matrix $\vec{w}$ has dimensionality $(O,I)$ where $O$ and $I$ are the output and input number of features, respectively. A vector $\vec{s}$ of $O$ scalars is used to quantize the row vectors of $\vec{w}$ in a similar fashion as the POS scaling case.
\end{itemize}
The advantage of POS and POF scaling is the finer granularity of quantization, which can boost accuracy. The above three methods are well-adapted to GPU and deep learning accelerator datapaths \cite{wu2020integer}. In addition, the overhead of quantization metadata is small enough that in our work we assume all scalars to be in full precision \cite{daiVSQuant}.

In our experiments, we use POC scaling for convolutional weight quantization in all ResNets and MobileNets, and POF scaling for the weight layers in BERT models. Tensor scaling is used for all other tensors, which include: activations and fully connected layers in ResNets and MobileNets, and activations in BERT models. For the last, we define activations as being inputs to the various linear layers in the transformer blocks, as well as inputs to the Batch Matrix Multiply (BMM) operations.

Finally, we note that first (convolutional) and last (fully connected) layers in ResNets and MobileNets are always quantized to 8-bit, as is customary for such networks \cite{pact}. However, projection layers (also known shortcut connections) are quantized to low precision (4-bit, 6-bit, and 8-bit, depending on the experiment).

\subsection{Static Quantization Calibration Details}
In this section, we describe the methodology to calibrate clipping scalars for retraining and fine-tuning experiments in Section \ref{sec:re_training} and \ref{sec:fine_tuning}, respectively.

Weight calibration is done by simply using the pretrained weights at various layers and computing various clipping scalar candidates. Specifically, the candidates used are the OCTAV and MSE sweep predicted scalars, as well as the $99.9^{\text{th}}$, $99.99^{\text{th}}$, and $99.999^{\text{th}}$ percentiles.

For activations, we sample 5 random input batches from the training set. For each, we compute the various candidates above at every layer. Thus, for each strategy, we obtain a collection of 5 candidates for every activation tensor. The average of these candidates is used as a calibrated clipping scalar. For BERT-Base, we sample and average over 10 random inputs from the training set.

We define the MSE Sweep as being a 100-point sweep. Specifically, for every tensor, we evaluate the sweep over $\{\frac{k}{100}\cdot s_{\max}\}_{k=1}^{100}$. In the case of POC and POF scaling, we run the sweep over $\{\frac{k}{100}\cdot \vec{s}_{\max}\}_{k=1}^{100}$ where $\vec{s}_{\max}$ is the vector of row-wise maxima in the tensors. This is an approximation leading to a weaker solution than OCTAV, which, thanks to broadcasting, can perform row-wise optimization with no overhead. Nonetheless, we find both to yield similar results, as highlighted in our results in Section \ref{sec:re_training}.

\section{OCTAV vs. Brute Force Sweep Speed Comparison}
\label{appendix:timing_comparison}
We compare times taken to calibrate BERT-Base tensors when using OCTAV vs. the MSE sweep. The calibration was done on an Intel Xeon CPU, using the NumPy package, and no code optimization was performed for either. We choose to measure time on a CPU to provide the fairest and most accurate comparison possible. Indeed, on a GPU, significant noise in time measurement can occur due to various communication of data between GPU and host that may be required. As OCTAV only requires tensor operations, its GPU speedup over MSE sweeps is expected to be even greater than what we report here.

In BERT-Base, there are 74 weight layers and thus 74 weight tensors to be processed. For each of these layers, we also process an input activation tensor. As we use 10 random input batches, we obtain 740 activation tensors to process. Furthermore, there are 24 BMM operations, each of which has two operands that are added to the list of activations. Thus, we get another 48 activation tensors as BMM operands per input batch, for a total of 480 additional activation tensors. Thus, we process 1220 activation tensors overall.

As we perform calibration for $B=4,5,6,7,8$, we report average times per-precision. For each tensor, we start a timer as soon as we invoke our calibration (OCTAV vs. MSE sweep) routine and stop it as soon as execution terminates. 

\begin{table}[!t]
    \centering
    \caption{Calibration times comparison of OCTAV vs. MSE Sweep for BERT-Base}
    \label{table:calibration_times}
    \begin{tabular}{|c|c|c|c|}
    \cline{3-4}
    \multicolumn{2}{c|}{~} & Weight & Activation\\
    \hline
    \multirow{2}{*}{\specialcell{Per-Tensor \\ Avg. (seconds)}}  & OCTAV & 0.191 & 0.497 \\
    \cline{2-4} 
    ~& MSE Sweep & 1.943 & 3.141\\
    \hline
    \multirow{2}{*}{\specialcell{Total Calib. \\  (h:mm:ss)}} & OCTAV & 0:00:14 & 0:10:06\\
    \cline{2-4}
    & MSE Sweep & 0:02:24 & 1:03:52\\
    \hline
    \multicolumn{2}{|c|}{Speed-up} & 10.2$\times$& 6.3$\times$ \\
    \hline
    \end{tabular}
\end{table}

Our measurements are listed in Table \ref{table:calibration_times}. We find that OCTAV processes one weight tensor in 0.191 sec., on average completing all weight calibration in only 14 sec. In contrast, the MSE sweep requires over 2 mins. and is 10.2 $\times$ slower. For activations, OCTAV processes one tensor in 0.497 sec. on average, and completes all activation calibration in just over 10 mins. This corresponds to the calibration of 1220 tensors on a CPU. In contrast, the MSE sweep is 6.3 $\times$ slower and requires over 1 hour to terminate. Interestingly, activation calibration speed-up is less pronounced than that of weights. We speculate that this is due to the activation tensors being much larger so that much of the execution time goes into data movement from memory, which amortizes the speed-up.

\section{When is MSE not Convex?}
\label{appendix:mse_convexity}
Comparing static quantization using OCTAV vs. MSE sweep results for BERT models in Section \ref{sec:fine_tuning} reveals unexpected results. As shown in Table \ref{table:bert_combined}, OCTAV calibration at a low precision yields a significantly higher accuracy than the MSE sweep. As both methods are supposed to return similar solutions, we investigated the reason of this discrepancy.

\begin{figure}[!t]
\begin{center}
    \begin{subfigure}[t]{0.45\textwidth}
    \centering
        \includegraphics[width = 0.7\linewidth]{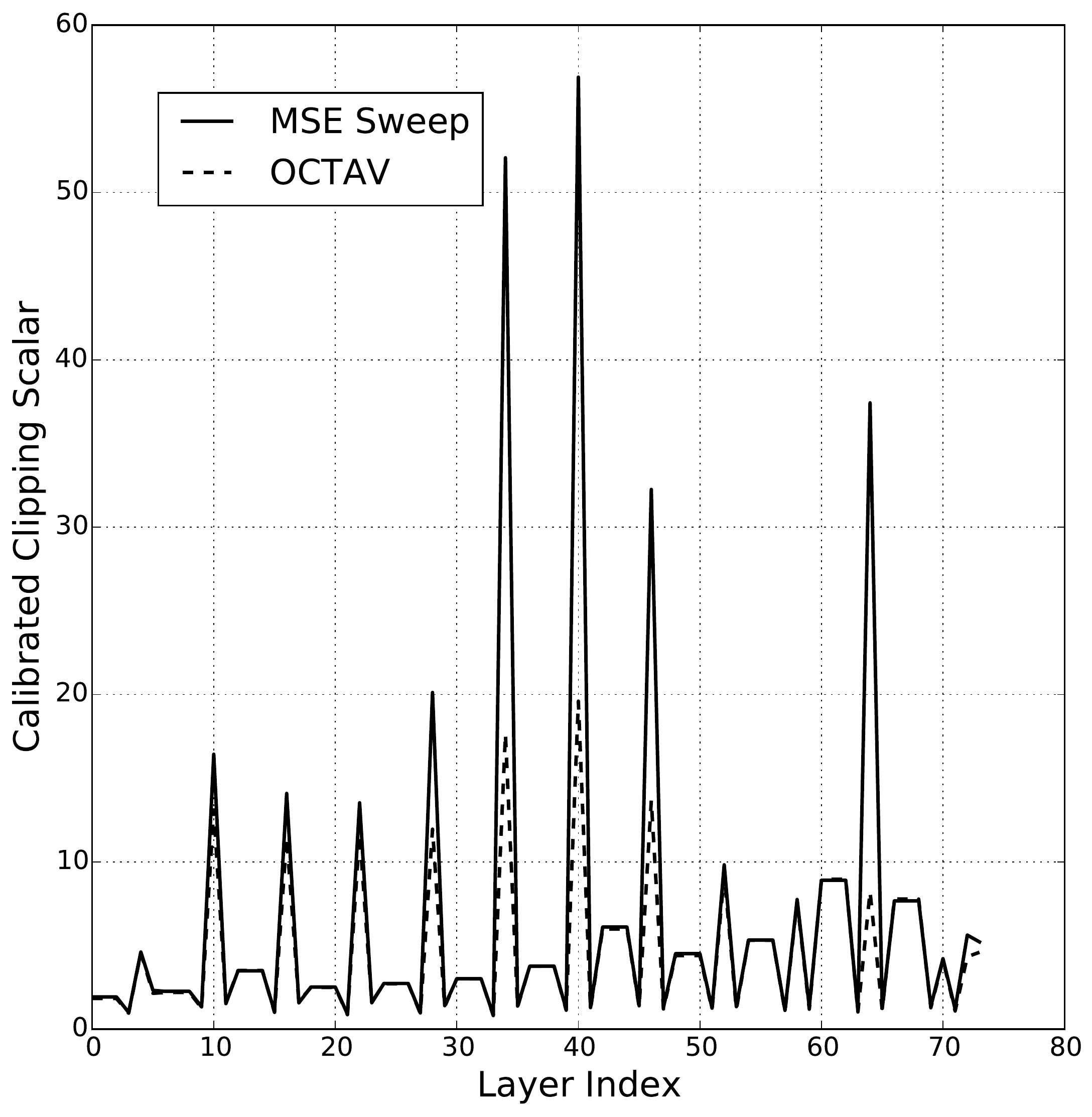}
    \caption{}
    \end{subfigure}\\
    \begin{subfigure}[t]{0.45\linewidth}
    \centering
        \includegraphics[width = 0.99\linewidth]{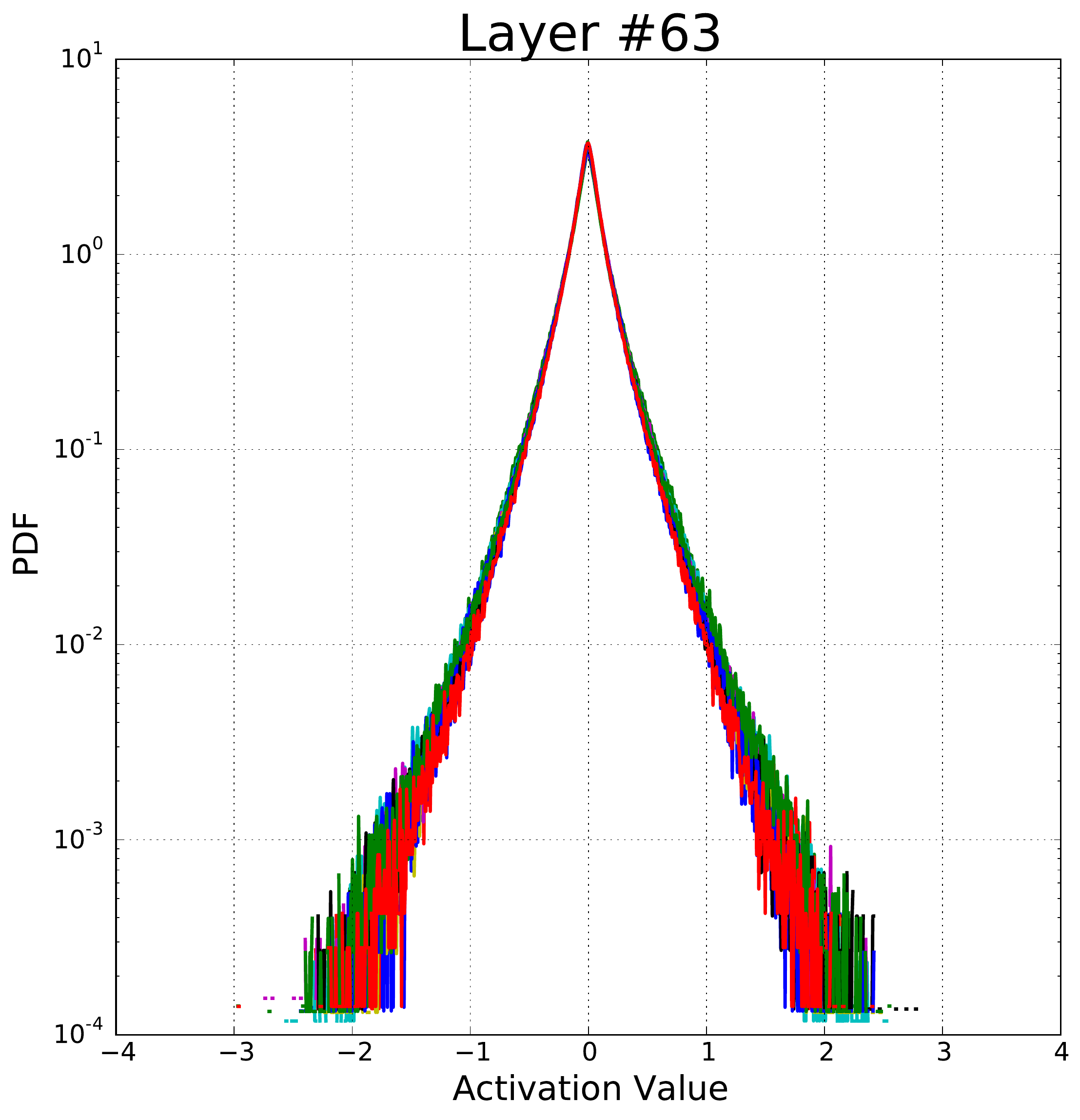}
    \caption{}
    \end{subfigure}~
    \begin{subfigure}[t]{0.45\linewidth}
    \centering
        \includegraphics[width = 0.99\linewidth]{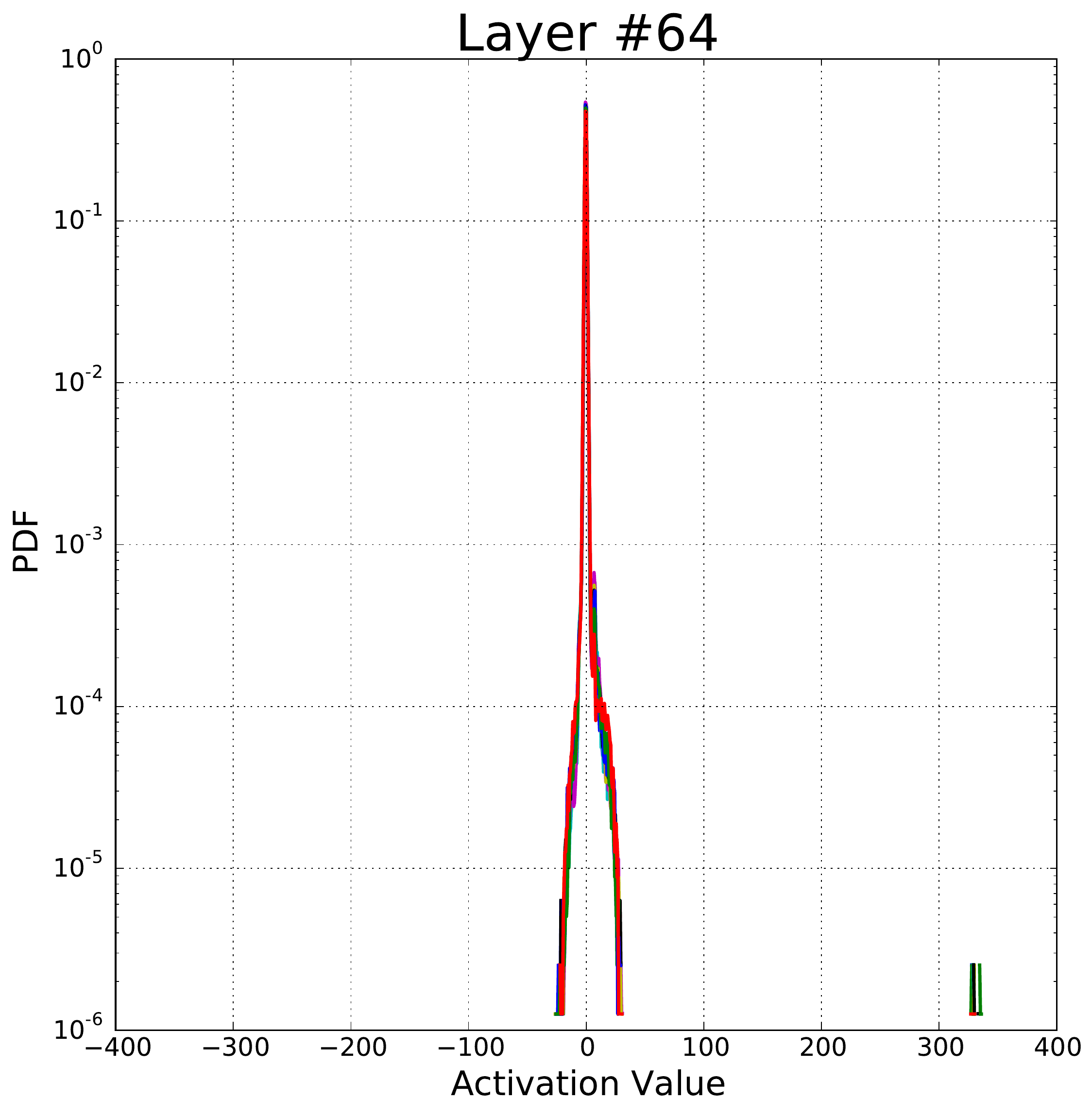}
    \caption{}
    \end{subfigure}\\
    \begin{subfigure}[t]{0.45\linewidth}
    \centering
        \includegraphics[trim=7.5cm 0 7.5cm 0, clip,width = 0.99\linewidth,page=8]{figures/paper_figures.pdf}
    \caption{}
    \end{subfigure}~
    \begin{subfigure}[t]{0.45\linewidth}
    \centering
        \includegraphics[trim=7.5cm 0 7.5cm 0, clip,width = 0.99\linewidth,page=9]{figures/paper_figures.pdf}
    \caption{}
    \end{subfigure}
\end{center}
\caption{\footnotesize Investigation of 4-bit clipped quantization MSE convexity in BERT-Base: (a) calibrated clipping scalars returned by OCTAV and MSE sweep for activation layers, (b) PDF of activation tensors at layer 63, (c) PDF of activation tensors at layer 64, (c) MSE vs. clipping scalar at layer 63, and (d) MSE vs. clipping scalar at layer 64. Different colors correspond to the 10 different input batches used for calibration. Activation tensors considered are input to the linear layers in BERT-Base, while BMM operands are not shown.}
\label{fig:bert_base_convexity}
\end{figure}

In Figure \ref{fig:bert_base_convexity}(a) we plot the calibrated clipping scalar for activations (linear layers input only, BMM operands excluded) as a function of layer index when using OCTAV and the MSE sweep. We observe that the solution is identical almost everywhere, except at a few layers. Specifically, at layers 28, 33, 40, and 64, the OCTAV-calibrated scalar is noticeably smaller than its MSE sweep counterpart. To understand this phenomenon, we carefully compared the data at layers 63 and 64. For the former, the two calibrated strategies converge to the same solution, but for the latter, OCTAV returns a clipping scalar of $\sim$8 while the MSE sweep returns $\sim$38.

In Figure \ref{fig:bert_base_convexity}(b,c), we plot the empirical probability distribution function (PDF) of the tensors used for calibration at layer 63 and 64, respectively. These tensors correspond to the 10 random input batches selected for calibration, each of which is represented by one color. A stark contrast in distribution is observed: layer 63 appears to be typical with most of the density concentrated around zero, while layer 64 data has large outliers around the value 350.

In Figure \ref{fig:bert_base_convexity}(d,e), we plot the 4-bit clipped quantization MSE as a function of clipping scalars for layers 63 and 64, respectively. We also pinpoint the predicted clipping scalar by OCTAV for each of the 10 tensors. As expected, for layer 63, the MSE is a convex function and OCTAV accurately converges to its global optimum. In contrast, layer 64 exhibits a more complex behavior. The MSE does include a local minimum close to zero, which corresponds to the trade-off between discretization and clipping noise, and to which OCTAV converges to. Beyond that point, the MSE increases as the clipping scalar increases, and this is due to an increase in the discretization step. At some point, non-outlier data is bound to become smaller than the magnitude of the least significant bit and thus be quantized to zero. At this point, the quantization noise  related to non-outlier data quantization is equal to its total variance and no longer increases when the clipping scalar increases. However, with larger clipping scalar, the quantization noise of outliers themselves decreases, which leads to the MSE decreasing again. The MSE therefore has a second minimum, which is close to the maximum scalar and only caters for outlier representation. Furthermore, for one of the 10 tensors processed, in this case, the one corresponding to the green line in Figure \ \ref{fig:bert_base_convexity}(d), the MSE at this second minimum is smaller than that of the first one. This second minimum is thus selected by the MSE sweep and skews the calibrated scalar towards the outliers.

In conclusion, when tensors have large outliers, the MSE sweep may return a calibrated scalar that caters only for outliers and zeroes out all small values. In contrast, OCTAV converges to the minimum closest to zero, which balances discretization and clipping noise of all data in the tensor. This is in fact the only minimum identified by $J(s)$ in \eqref{eqn:clipped_quantization_mse}, due to the additive noise model assumption. Thus, OCTAV is guaranteed to converge to this first minimum. Intuitively, the solution returned by OCTAV is desired, as it caters for \textit{all} the data in the tensor. This intuition concurs with our experimental results in Section \ref{sec:fine_tuning}. Indeed, static-OCTAV was found to yield a noticeably higher accuracy than the MSE sweep for both BERT models at low precision (4-to-6-bit).

\end{document}